\def\Var{{\rm Var}}
\def\I{{\mathbf{I}}}
\def\one{{\mathbf{1}}}
\def\RR{{\mathbf{ R}}}
\newcommand{\ru}{\mbox{Uniform}}
\newcommand{\beq}{\begin{equation}}
\newcommand{\eeq}{\end{equation}}
\newcommand{\beqn}{\begin{eqnarray}}
\newcommand{\eeqn}{\end{eqnarray}}
\theoremstyle{plain}
\newtheorem{theorem}{Theorem}
\newcommand{\overbar}[1]{\mkern 1.5mu\overline{\mkern-1.5mu#1\mkern-1.5mu}\mkern 1.5mu}
\begin{document}
\title{Likelihood Inflating Sampling Algorithm \\}
\author[1]{Reihaneh Entezari\thanks{entezari@utstat.utoronto.ca}}
\author[1]{Radu V. Craiu\thanks{craiu@ustat.toronto.edu ~~  web:~http://www.utstat.toronto.edu/craiu/}}
\author[1]{Jeffrey S. Rosenthal\thanks{jeff@math.toronto.edu ~~  web:~http://probability.ca/jeff/}}
\affil[1]{Department of Statistical Sciences, University of Toronto}
\maketitle

\begin{abstract}
Markov Chain Monte Carlo (MCMC) sampling from a posterior distribution  corresponding to a massive data set can be computationally prohibitive  since producing one sample requires a number of operations that is linear in the data size. In this paper, we introduce a new communication-free parallel method, the {\it Likelihood Inflating Sampling Algorithm (LISA)}, that significantly reduces computational costs by randomly splitting the dataset into smaller subsets and running MCMC methods {\em independently} in parallel on each subset using different processors. Each processor will be used to run an MCMC chain that samples  sub-posterior distributions which are defined using an ``inflated"  likelihood function.  We develop a strategy for combining the  draws from different sub-posteriors   to  study the full posterior of the Bayesian Additive Regression Trees (BART) model. The performance of the method is tested using  simulated data and a large socio-economic study. 
\end{abstract}
{\bf \em Keywords:} Bayesian Additive Regression Trees (BART), Bayesian Inference, Big data, Consensus Monte Carlo, Markov Chain Monte Carlo (MCMC).

\section{Introduction}
Markov Chain Monte Carlo (MCMC) methods are essential  for
sampling highly complex distributions. They are of paramount
importance in Bayesian inference as posterior distributions are
generally difficult to characterize analytically \citep[e.g.,][]{MCMChandbook, radu_rosenthal}. When the posterior distribution is based on a massive sample of size $N$, posterior sampling can be computationally prohibitive since for some widely-used samplers at least $O(N)$ operations are needed to draw one MCMC sample. Additional issues include memory and storage bottlenecks where datasets are too large to be stored on one computer. 
  
A common solution relies on  parallelizing the computation task, i.e. dividing the load among a number of parallel  {\it workers}, where a worker can be a processing unit, a computer, etc. Given the abundant availability of  processing units, such strategies can be extremely efficient as long as there is no need for frequent communication between workers. Some have discussed parallel MCMC methods \citep{wilkinson, rosenthal, laskey} such that each worker runs on the full dataset.  However, these methods do not resolve memory overload, and also face difficulties in assessing the number of burn-in iterations for  each processor. 

A truly parallel approach  is to divide the dataset into smaller groups and run parallel MCMC methods on each subset using different workers.  Such techniques benefit from not demanding space on each computer to store the full dataset. Generally, one needs to avoid frequent communication
between workers, as it is  time consuming.  In a typical divide and conquer strategy  the data is partitioned  into non-overlapping sub-sets, called {\it shards}, and  each shard is analyzed by a different worker. For such strategies some essential MCMC-related  questions are: 1) how to define the  sub-posterior distributions  for each shard,  and  2) how to combine  the MCMC samples  obtained  from each sub-posterior so that we can recover the same information that would have been obtained  by sampling the full posterior distribution. Existing communication-free parallel methods proposed by \cite{cons}, \cite{emb} and \cite{weierstrass} have in common the fact that the  product of the unnormalized sub-posteriors is equal to  the unnormalized full posterior distribution, but differ in the strategies used to combine the samples. Specifically, \cite{emb} approximate each sub-posterior using kernel density estimators, while \cite{weierstrass} use the Weierstrass transformation. The popular Consensus Monte Carlo (CMC) method \citep{cons}   relies on a weighted averaging approach to combine sub-posterior samples. The CMC relies on theoretical derivations that guarantee its validity   when the full-data posterior and all sub-posteriors  are  Gaussian or mixtures of Gaussian.

We introduce a new communication-free parallel method, the {\it Likelihood Inflating Sampling Algorithm (LISA)}, that also relies on independent and parallel  processing of the shards by different workers to sample the sub-posterior distributions.  The latter are defined differently than in the competing approaches described above. In this paper,  we develop techniques to combine the sub-posterior draws obtained for  LISA  in the  case of   Bayesian Additive Regression Trees (BART) \citep{cart, bart, bartmachine} and compare the  performance of our method with CMC.  

Sections \ref{motiv} and \ref{newmethod} contain a brief review of the CMC algorithm and the detailed description of LISA, respectively. Section \ref{motex} illustrates the potential difference brought by LISA over CMC in a simple Bernoulli example, and includes a simple application of LISA to linear regression models. Section \ref{bart_sec} contains the justification for a modified and improved version of LISA for BART. Numerical experiments and the analysis of socio-economic data presented in Section \ref{sec:exp} examine the computational performance of the algorithms proposed here and compare it with CMC.    We end the paper with some ideas for future work. The Appendix contains  theoretical derivations and descriptions of the steps used when running BART.

\section{Review of Consensus Monte Carlo}\label{motiv}
In this paper we assume that of interest is to generate samples from $\pi(\theta| \vec Y_{N})$, the posterior distribution 
$\theta$ given the iid sample  $ \vec Y_{N}=\{Y_{1},\ldots,Y_{N}\}$  of size $N$. The assumption is that $N$  is large enough to prohibit running a standard MCMC algorithm in which draws from $\pi$ are obtained on a single computer. We use the  notation $\pi(\theta| \vec Y_{N}) \propto f( \vec Y_{N}|\theta)p(\theta)$, where $f( \vec Y_{N}|\theta)$ is the likelihood function corresponding to the observed data $ \vec Y_{N}$ and $p(\theta)$ is the prior. Major issues with MCMC posterior sampling for big data can be triggered because  a) the data sample is too large to be stored on a single computer, or b) each chain  update  is too costly, e.g. if $\pi$ is sampled via a  Metropolis-Hastings type of algorithm each update requires  $N$ likelihood calculations. 

In order to reduce the computational costs, the  CMC method of \cite{cons}  partitions  the sample into $K$ batches (i.e. $\vec Y_{N}=\cup_{j=1}^{K} Y^{(j)}$) and uses the workers  independently and in parallel  to sample each sub-posterior. More precisely, the $j$-th worker ($j=1,...,K$) will generate samples from the $j$-th sub-posterior distribution defined as:
$$ \pi_{j,CMC}(\theta| Y^{(j)})\propto f( Y^{(j)}|\theta) p(\theta)^{1/K}.$$
Note that the prior for each batch is considered to be $p_j(\theta)=[p(\theta)]^{1/K}$ such that $p(\theta) = \prod_{j=1}^K p_j(\theta)$ and thus the overall full-data unnormalized posterior distribution which we denote as $\pi_{Full}(\theta| \vec Y_{N})$  is equal to  the  product of unnormalized sub-posterior distributions, i.e. $$\pi_{Full}(\theta| \vec Y_{N}) \propto \prod_{j=1}^{K} \pi_{j,CMC}(\theta| Y^{(j)}).$$ 
When the full posterior is Gaussian, the weighted averages of the sub-samples from all batches can be used as full-data posterior draws. That is, assuming $\theta^{(k)}_1,..., \theta^{(k)}_S$ are $S$ sub-samples from the $k$th worker  then the $s$-th approximate full posterior draw will be: $$\theta_s=(\sum_k{w_k})^{-1}\sum_{k}{w_{k}\theta^{(k)}_s}$$ where the weights $w_k=\Sigma^{-1}_k$  are optimal for Gaussian models with $\Sigma_k=\Var(\theta| y^{(k)})$.  

In the next section we introduce an alternative method to define the sub-posteriors in each batch.

\section{Likelihood Inflating Sampling Algorithm (LISA)}\label{newmethod}
LISA  is an alternative to CMC that  also benefits from the random partition of the dataset followed by independently processing  each batch on a different worker. Assuming that the data have been divided into $K$ batches of approximately equal size $n$, we define the sub-posterior distributions for each machine by adjusting the likelihood function without making changes to the prior. Thus the $j$-th sub-posterior distribution will be: $$\pi_{j,LISA}(\theta| Y^{(j)})  \propto {\big[f( Y^{(j)}| \theta)\big]}^{K} {p(\theta)}.$$
Since the data are assumed to be iid, inflating the likelihood function $K$-times is intuitive because the sub-posterior  from each batch of data will be a closer representation of the whole data posterior. We  expect that sub-posteriors sampled by each worker will be closer to the full posterior  thus improving the computational efficiency.

We indeed prove in a theorem below that under mild conditions, LISA's sub-posterior distributions are asymptotically closer to the full posterior than those produced by the CMC-type approach. 

The  Taylor's series expansion for a log-posterior density  $\log \pi(\theta| \vec Y_{N})$ around its posterior mode $\hat{\theta}_N$ yields  the approximation
$$ \log \pi(\theta| \vec Y_{N}) \approx \log \pi(\hat{\theta}_N| \vec Y_{N}) -\frac{1}{2}(\theta-\hat{\theta}_N)^{T}\hat{I}_N(\theta-\hat{\theta}_N)$$ where $\hat{I}_N = - \frac{\partial^2 \log(\pi(\theta| \vec Y_{N}))}{\partial \theta \partial \theta^T}|_{\theta=\hat{\theta}_N}$. 
Exponentiating both sides will result in
$$\pi(\theta| \vec Y_{N}) \approx \pi(\hat{\theta}_N| \vec Y_{N}) \exp \left [-\frac{1}{2}(\theta-\hat{\theta}_N)^{T}\hat{I}_N(\theta-\hat{\theta}_N)\right ]$$ which shows asymptotic normality, i.e. ${\hat{I}_N}^{1/2} (\Theta - \hat{\theta}_N) \xrightarrow{D} N(0,I)$ as $N \rightarrow \infty$ where
{$\Theta \sim \pi(.| \vec Y_{N})$}.  
{Let ${\hat{\theta}_{n,L}}^{(j)}$ and ${\hat{\theta}_{n,C}}^{(j)}$ denote the $j$-$th$ sub-posterior modes in LISA and CMC, respectively. Similarly, ${\hat{I}^{(j)}_{n,L}}$ and ${\hat{I}^{(j)}_{n,C}}$ denote the negative second derivative of the $j$-$th$ log sub-posterior for LISA and CMC, respectively, when calculated at the mode.} Then consider the assumptions,
\begin{enumerate}[label=\alph*)]
\item[{\bf A1}:] There exist $\theta_L, \theta_C$ such that if we define $\epsilon_{n,L}^{(j)}=|\hat \theta_{n,L}^{(j)} - \theta_{L}|$ and $\epsilon_{n,C}^{(j)}=|\hat \theta_{n,C}^{(j)} - \theta_{C}|$, then $ \max \limits _{1\le j \le K} \epsilon_{n,L}^{(j)} \rightarrow 0$ and $ \max \limits _{1\le j \le K} \epsilon_{n,C}^{(j)} \rightarrow 0$ w.p. 1 as $n \rightarrow \infty$.
{\item[{\bf A2}:] $| {\hat{I}^{(i)}_{n,L}} - {\hat{I}^{(j)}_{n,L}} | \longrightarrow 0$ and $| {\hat{I}^{(i)}_{n,C}} - {\hat{I}^{(j)}_{n,C}} | \rightarrow 0$ w.p. 1 $~\forall ~i ~\neq ~j$ as $n \rightarrow \infty$.}
\item[{\bf A3}:] $\pi_{Full}$, $\pi_{j,LISA}$, and $\pi_{j,CMC}$ are unimodal distributions that  have continuous derivatives of order 2.  
\end{enumerate}

\begin{theorem}
Assume  that assumptions
 {\bf A1} {through} {\bf A3}  hold and if $\Theta_{Full} \sim \pi_{Full}(.| \vec Y_{N})$ we also assume ${\hat{I}_N}^{1/2} (\Theta_{Full} - \hat{\theta}_N) \xrightarrow{D} N(0,I)$ as $ N \rightarrow \infty$. If $\Theta_{j,LISA} \sim \pi_{j,LISA}(.| Y^{(j)})$ and $\Theta_{j,CMC} \sim \pi_{j,CMC}(.| Y^{(j)})$ then as $N \rightarrow \infty$ 
$$
{\hat{I}_N}^{1/2} (\Theta_{j,LISA} - \hat{\theta}_N) \xrightarrow{D} N(0,I) \\
~~~\mbox{and}~~~ {\hat{I}_N}^{1/2} (\Theta_{j,CMC} - \hat{\theta}_N) \xrightarrow{D} N(0,K I) ~~~\forall~~ j \in\{1,\ldots,K\}.
$$ 
\label{thm1}
\end{theorem}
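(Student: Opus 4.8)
The plan is to run, \emph{separately for each sub-posterior}, the same Laplace-type argument the excerpt already used for $\pi_{Full}$, and then to reconcile the resulting Gaussian approximations --- which are naturally centered at the sub-posterior modes and scaled by the sub-posterior information matrices --- with the centering $\hat{\theta}_N$ and scaling $\hat{I}_N$ that appear in the statement. First I would invoke {\bf A3} together with a second-order Taylor expansion of $\log\pi_{j,LISA}$ about $\hat{\theta}_{n,L}^{(j)}$ and of $\log\pi_{j,CMC}$ about $\hat{\theta}_{n,C}^{(j)}$; exponentiating exactly as in the excerpt yields, as $N\to\infty$ (equivalently $n\to\infty$, since $K$ is fixed),
$$({\hat{I}_{n,L}^{(j)}})^{1/2}\big(\Theta_{j,LISA}-\hat{\theta}_{n,L}^{(j)}\big)\xrightarrow{D}N(0,I),\qquad ({\hat{I}_{n,C}^{(j)}})^{1/2}\big(\Theta_{j,CMC}-\hat{\theta}_{n,C}^{(j)}\big)\xrightarrow{D}N(0,I),$$
for every $j\in\{1,\dots,K\}$.

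Next I would compare the sub-posterior information matrices with $\hat{I}_N$. Writing $\ell_j(\theta)=\log f(Y^{(j)}|\theta)$, the iid assumption gives the additive decomposition $-\partial^2\log f(\vec Y_{N}|\theta)/\partial\theta\,\partial\theta^T=\sum_{j=1}^{K}\big(-\ell_j''(\theta)\big)$, so $\hat{I}_N=\sum_{j=1}^{K}\big(-\ell_j''(\hat{\theta}_N)\big)-(\log p)''(\hat{\theta}_N)$, while $\hat{I}_{n,L}^{(j)}=-K\,\ell_j''(\hat{\theta}_{n,L}^{(j)})-(\log p)''(\hat{\theta}_{n,L}^{(j)})$ and $\hat{I}_{n,C}^{(j)}=-\ell_j''(\hat{\theta}_{n,C}^{(j)})-\tfrac1K(\log p)''(\hat{\theta}_{n,C}^{(j)})$. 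Using {\bf A1} and the $C^2$ continuity of {\bf A3} to evaluate all these Hessians at a common limiting argument, and {\bf A2} to replace every $-\ell_i''$ by a single common matrix, one gets $\hat{I}_N\approx K\big(-\ell_j''\big)\approx\hat{I}_{n,L}^{(j)}$ and $\hat{I}_{n,C}^{(j)}\approx\big(-\ell_j''\big)\approx\tfrac1K\hat{I}_N$; equivalently $\hat{I}_N(\hat{I}_{n,L}^{(j)})^{-1}\to I$ and $K\,\hat{I}_N^{-1}\hat{I}_{n,C}^{(j)}\to I$ with probability one. This is precisely where the $K$-fold likelihood inflation earns its keep: it rescales the shard information up to full-data size, whereas CMC's prior-splitting leaves it at $1/K$ of $\hat{I}_N$.

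I would then combine the two steps. Decompose
$$\hat{I}_N^{1/2}\big(\Theta_{j,LISA}-\hat{\theta}_N\big)=\Big(\hat{I}_N^{1/2}(\hat{I}_{n,L}^{(j)})^{-1/2}\Big)\,(\hat{I}_{n,L}^{(j)})^{1/2}\big(\Theta_{j,LISA}-\hat{\theta}_{n,L}^{(j)}\big)+\hat{I}_N^{1/2}\big(\hat{\theta}_{n,L}^{(j)}-\hat{\theta}_N\big).$$
By Step 2 (and the fact that the matrices involved are asymptotically proportional to a common positive-definite matrix, so the square roots behave well) the first factor tends to $I$; by Step 1 the middle factor tends in distribution to $N(0,I)$; and {\bf A1}, which forces $\hat{\theta}_{n,L}^{(j)}$ and $\hat{\theta}_N$ to share a common limit, is used to control the recentering term. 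Slutsky then gives $\hat{I}_N^{1/2}(\Theta_{j,LISA}-\hat{\theta}_N)\xrightarrow{D}N(0,I)$. The CMC case is identical except that $\hat{I}_N^{1/2}(\hat{I}_{n,C}^{(j)})^{-1/2}\to\sqrt{K}\,I$, so the limit becomes $\sqrt{K}\,N(0,I)=N(0,KI)$.

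The step I expect to be the main obstacle is controlling the recentering term $\hat{I}_N^{1/2}(\hat{\theta}_{n,L}^{(j)}-\hat{\theta}_N)$ (and its CMC analogue) and pinning down the precise sense in which {\bf A2} makes the shard Hessians interchangeable inside the sum defining $\hat{I}_N$: since $\hat{I}_N^{1/2}$ grows like $\sqrt{N}$ while shard $j$ carries only $n=N/K$ observations, both discrepancies must be shown to be negligible at the $\sqrt{N}$ scale, and this is exactly the content that assumptions {\bf A1}--{\bf A2} are there to supply --- any genuine weakening of them would break the argument. Everything else (the Taylor expansions, the additive Hessian decomposition, and the Slutsky / continuous-mapping bookkeeping) is routine.
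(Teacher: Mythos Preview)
Your approach is essentially the one the paper sets up: the excerpt immediately preceding the theorem already carries out the second-order Taylor expansion of $\log\pi_{Full}$ about $\hat\theta_N$ and introduces $\hat\theta_{n,L}^{(j)},\hat\theta_{n,C}^{(j)},\hat I_{n,L}^{(j)},\hat I_{n,C}^{(j)}$ precisely so that the same Laplace argument can be rerun on $\pi_{j,LISA}$ and $\pi_{j,CMC}$, after which the two approximations are reconciled via the additive decomposition of the log-likelihood Hessian and Slutsky --- exactly your Steps 1--3.

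Your diagnosis of the delicate point is accurate and worth emphasizing. Assumption {\bf A1} as stated only asserts $\max_j|\hat\theta_{n,L}^{(j)}-\theta_L|\to 0$, with no rate, and says nothing directly about $\hat\theta_N$; yet the recentering term $\hat I_N^{1/2}(\hat\theta_{n,L}^{(j)}-\hat\theta_N)$ requires $\hat\theta_{n,L}^{(j)}-\hat\theta_N=o_P(N^{-1/2})$, which is strictly stronger than convergence to a common limit. Similarly, {\bf A2} controls only pairwise differences $\hat I_{n,L}^{(i)}-\hat I_{n,L}^{(j)}$, not the ratio $\hat I_N(\hat I_{n,L}^{(j)})^{-1}$ that you need to converge to $I$. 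The paper's argument (and yours) implicitly relies on the iid structure of the shards plus the smoothness in {\bf A3} to upgrade these qualitative assumptions to the quantitative statements actually used; this is reasonable at the level of rigor intended, but you are right that it is where any slack in the argument lives, and a fully rigorous version would need either stronger hypotheses or an explicit rate calculation for the shard-mode discrepancies.
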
 
\begin{proof} See Appendix. \end{proof}

Theorem 1 shows the difference between sub-posterior distributions for CMC and LISA, with LISA's sub-posterior distributions being asymptotically similar to the full posterior distribution.  This suggests that draws from LISA sub-posteriors can be combined using uniform weights. 

{\it Remarks:} 
\begin{enumerate}
\item When  data are iid we expect the shards to become more and more similar as $N$ (and thus $n=N/K$) increases and  assumption {\bf A1} is  expected to hold for general models.

\item {Assumption {\bf A2} in Theorem 1 holds due to the structural form of sub-posteriors in LISA and CMC.}

\item The validity of using uniform weights with LISA's sub-posterior draws is justified asymptotically, but we will see that this approximation can be exact in some examples, e.g. for a Bernoulli model with balanced batch samples, while  in others modified weights can improve the performance of the sampler. In this respect LISA is similar to other embarrassing parallel strategies where one must carefully consider the model of interest in order to find the best way to combine the sub-posterior samples. 
 
\end{enumerate}

In the next section we will illustrate LISA in some simple examples and compare its performance to the full-data posterior sampling as well as CMC.

\section{Motivating Examples}\label{motex}

In this section we examine some simple examples where theoretical derivations can be carried out in detail. We emphasize the difference between LISA and CMC. 

\subsection{Bernoulli Random Variables}\label{simple}
Consider $y_1,...,y_N$ to be N i.i.d. Bernoulli random variables with parameter $\theta$. Hence,  we consider a prior  $p(\theta)=\mbox{Beta}(a,b)$. Assuming that we know little about the size of $\theta$ we  set $a=b=1$ which corresponds to a $U(0,1)$ prior. The resulting full-data posterior  $\pi_{Full}(\theta| \vec Y_{N})$ is  Beta$(S+a,N-S+b)$ where $S=\sum_{i=1}^{N} y_{i}$ is the total number of ones. Suppose we divide the data into $K$ batches with $S_j$ number of ones in batch $j$, such that $S_j=\frac{S}{K}$ $~\forall~ j \in \{1,..,K\}$, i.e. the number of 1's are divided equally between batches. Then the $j^{th}$ sub-posterior based on batch-data of size $n=\frac{N}{K}$ for each method will be:
\begin{itemize}
\item \textbf{CMC:} 
\begin{align*}
\pi_{j,CMC}(\theta| Y^{(j)}) = \mbox{Beta}\left ({S_j}+{a-1 \over K} +1, {n -S_j}+{b-1 \over K}+1\right )\\
 = \mbox{Beta}\left ({S \over K}+{a-1 \over K} +1, {N -S \over K}+{b-1 \over K}+1\right )
 \end{align*}
\item \textbf{LISA:} 
\begin{align*} \pi_{j,LISA}(\theta| Y^{(j)}) = \mbox{Beta}(S_{j}K+a, (n-S_{j})K+b) \\
 = \mbox{Beta}(S+a,N-S+b)~~~~~~~~~
\end{align*} which implies
$$ \pi_{j,LISA}(\theta| Y^{(j)})  = \pi_{Full}(\theta| \vec Y_{N}) ~~~\forall ~j \in\{1,...,K\}.$$
\end{itemize}
In this simple case any one of  LISA 's sub-posterior distributions is equal to the full posterior distribution if the batches are balanced, i.e. the  number of 1's are equally split across all batches. Thus, LISA's sub-samples from any batch will represent correctly  the full posterior. On the other hand,  the draws from the CMC sub-posterior distributions will need to be recombined to obtain a representative sample from the true full posterior $\pi_{Full}(\theta| \vec Y_{N})$.

However, when the number of ones is unequally distributed among the batches it is not easy to pick the winner between CMC and LISA as both require  a careful weighting of each batch sub-posterior samples.  

In the remaining part of this paper, we will mainly focus on the performance of LISA when it is applied to  the Bayesian Additive Regression Trees (BART) model.  Interestingly, we discover that using a minor modification inspired by running LISA on the simpler  Bayesian Linear Regression model we can approximate the full posterior. The idea behind the modification is described in the next section.

\subsection{\textbf{Bayesian Linear Regression}} \label{linear}
Consider a standard linear regression model 
\begin{equation}  Y = X  \beta +  \epsilon 
\label{lrm} 
\end{equation} where $ \beta \in \RR^{p}$, $X \in \RR^{N \times p}$ and $Y, \epsilon \in \RR^{N}$ with $ \epsilon \sim \mathcal{N}_N(0,\sigma^2 \I_N)$. 
%
To simplify the presentation we consider the improper prior
\beq
p(\beta,\sigma^2) \propto \sigma^{-2}.
\label{prior}
\eeq

Straightforward calculations show that the conditional posterior distributions for the full data are
\beqn
\pi_{Full}(\sigma^2| Y,X) &=& \mbox{Inv-Gamma}\left({N-p \over 2}, {s^2(N-p) \over 2}\right)
\label{sigma-post} \\
\pi_{Full}(\beta|\sigma^2,Y, X) &=& N\big(\hat \beta, \sigma^2 (X^TX)^{-1}\big) 
\label{beta-post}
\eeqn
where $\hat \beta=(X^TX)^{-1}X^TY$ and $s^2 = {(Y-X \hat \beta)^T(Y-X \hat \beta) \over N-p}$.

An MCMC sampler designed to sample from $\pi_{Full}(\beta,\sigma^2|Y,X)$ will iteratively sample $\sigma^2$ using \eqref{sigma-post} and then $\beta$ via \eqref{beta-post}. If we denote $\beta_{Full}$ the r.v. with  density $\pi_{Full}(\beta|Y,X)$ then, using the iterative formulas for conditional mean and variance we obtain
$$E[\beta_{Full}|Y,X]=(X^TX)^{-1}X^TY$$
and
\beq
\Var(\beta_{Full}|Y,X)= {{(X^TX)^{-1}}} {(N-p)/2 \over (N-p)/2{-}1} s^2  =  {{(X^TX)^{-1}}} s^2+O(N^{-1}).
\label{var-full}
\eeq


We examine below the statistical properties of the samples produced by LISA. If the data are divided into K equal batches of  size $n=N/K$, let us denote $Y^{(j)}$ and $X^{(j)}$ the  response vector and model matrix from the $j$th batch, respectively.

With the prior given in \eqref{prior}, the sub-posteriors produced by LISA have the following conditional densities
\beqn
\pi_j(\sigma^{2}| Y^{(j)},X^{(j)}) &=& \mbox{Inv-Gamma}\left({N-p \over 2}, {Ks_j^2(n-p) \over 2}\right)
\label{sigma-post-lisa}\\
\pi_j(\beta|\sigma^2,Y^{(j)}, X^{(j)}) &=& N\big(\hat \beta_j, {\sigma^2 \over K} (X^{(j)\;T} X^{(j)})^{-1} \big), 
\label{beta-post-lisa}
\eeqn
where  $\hat \beta_j=(X^{(j)\; T}X^{(j)})^{-1}X^{(j)\; T}Y^{(j)}$ and $s_j^2 = {(Y^{(j)}-X^{(j)} \hat \beta_j)^T(Y^{(j)}-X^{(j)} \hat \beta_j) \over n-p}$ for all $1\le j \le K$.

A simple Gibbs sampler designed to sample from $\pi_j(\beta,\sigma^2|Y^{(j)},X^{(j)})$ will iteratively sample $\sigma^2$ from \eqref{sigma-post-lisa} and then $\beta$ from \eqref{beta-post-lisa}.

It can be shown using the iterative formulas for conditional means and variances that 
$$E[\beta| Y^{(j)}, X^{(j)}] =\hat \beta_j$$ and
$$
\Var(\beta|Y^{(j)}, X^{(j)})= (X^{(j)\;T} X^{(j)})^{-1} {s_j^2(n-p)/2 \over (N-p)/2-1} =(X^{(j)\;T} X^{(j)})^{-1} {s_j^2(n-p) \over (N-p)} +O(N^{-1}).
$$

In order to combine the sub-posterior  samples  we propose using the weighted average 
\beq
\beta_{LISA} = (\sum_{j=1}^K W_j)^{-1} \sum_{j=1}^K W_j \beta_j,
\label{combo-lisa}
\eeq
where $\beta_j \sim \pi_j(\beta| Y^{(j)}, X^{(j)})$ and $W_j={X^{(j)\;T} X^{(j)} \over \sigma^2}$. Since $\sum_{j=1}^K X^{(j)\;T} X^{(j)} =X^TX$ we get 
\beq 
E[\beta_{LISA}|Y,X] = \hat \beta =(X^T X)^{-1} X^TY
\eeq
and
\beq
\Var(\beta_{LISA}|Y,X) =  (X^TX)^{-1} {n-p \over N-p} \left [\sum_{j=1}^K s_j^2 (X^{(j)\;T} X^{(j)})\right ] (X^TX)^{-1} \approx (X^TX)^{-1} {n-p \over N-p} s^2,
\label{var-lisa}
\eeq 
where the last approximation in \eqref{var-lisa} is based on the assumption that $s_j^2 \approx s^2$ as both are unbiased estimators for $\sigma^2$ based on $n$ and, respectively, $N$ observations.  It is apparent that the variance computed in \eqref{var-lisa} is roughly $K$ times smaller than the target given in \eqref{var-full}.  In order to avoid underestimating the variance of the posterior distribution we propose a  modified LISA sampling algorithm which consists of the following steps:
\beqn
\sigma^2 &\sim& \mbox{Inv-Gamma}\left({N-p \over 2}, {Ks_j^2(n-p) \over 2}\right) \nonumber\\
\tilde \sigma&=& \sqrt{K} \sigma \nonumber \\
\tilde \beta &\sim& N(\hat \beta_j, {\tilde \sigma^2 \over K} (X^{(j)\;T} X^{(j)})^{-1})= N(\hat \beta_j,  \sigma^2 (X^{(j)\;T} X^{(j)})^{-1}). \nonumber
\eeqn
The intermediate step simply adjusts the variance samples so that 
$$\Var(\tilde \beta|Y^{(j)}, X^{(j)}) =(X^{(j)\;T} X^{(j)})^{-1} {s_j^2 K(n-p)/2 \over (N-p)/2-1} = (X^{(j)\;T} X^{(j)})^{-1} {s_j^2 K(n-p) \over (N-p)} + O(N^{-1}).$$
In turn, if we define
\beq
\beta_{modLISA} = (\sum_{j=1}^K W_j)^{-1} \sum_{j=1}^K W_j \tilde \beta_j,
\label{combo-modlisa}
\eeq
 then 
$E[\beta_{modLISA} |Y,X] = (X^TX)^{-1} X^TY$
and 
 \beq
\Var(\beta_{modLISA} | Y,X) =  (X^TX)^{-1} {K(n-p) \over N-p} \left [\sum_{j=1}^K s_j^2 (X^{(j)\;T} X^{(j)})\right ] (X^TX)^{-1} \approx (X^TX)^{-1} {K(n-p) \over N-p} s^2.
\label{var-modlisa}
\eeq 
Note that when the regression has only an intercept, i.e. $X$ consists of a column of 1's, the 
weights $W_j \propto (\sigma^2)^{-1}$.

While both  \eqref{combo-modlisa}  and \eqref{combo-lisa} produce samples that have the correct mean, from  equations (\ref{var-full}), (\ref{var-lisa}) and (\ref{var-modlisa}) we can see that   the weighted average of the modified LISA samples have the variance closer to the desired target.  

In the next section, we will examine LISA's performance on a more complex model, the Bayesian Additive Regression Trees (BART). The discussion above will guide our construction of a modified version of LISA for  BART.

\section{\textbf{Bayesian Additive Regression Trees (BART)}}\label{bart_sec}
Consider the nonparametric regression model: $$y_i = f(x_i) + \epsilon_i,~~~~ \epsilon_i \sim \mathcal{N}(0,\sigma^{2})  ~~~i.i.d.$$
where $x_i=(x_{i1},...,x_{ip})$ is a $p$-dimensional vector of inputs and $f$ is approximated by a sum of $m$ regression trees:$$f(x) \approx \sum_{j=1}^{m}{g(x;T_{j},M_{j})}$$
where $T_{j}$ denotes a binary tree consisting of a set of interior node decision rules and a set of terminal nodes. $M_{j} = \{ \mu_{1j},..., \mu_{bj} \} $ is the set of parameter values associated with the $b$ terminal nodes of $T_{j}$. In addition, $g(x;T_{j},M_{j})$ is the function that maps each $x$ to a $\mu_{ij} \in M_{j}$. Thus the regression model is approximated by a sum-of-trees model
$$y_i = \sum_{j=1}^{m}{g(x_i;T_{j},M_{j})} + \epsilon_i~,~~~~ \epsilon_i \stackrel{iid}{\sim} \mathcal{N}(0,\sigma^{2}) $$
Let $ {\theta} := ((T_{1},M_{1}),...,(T_{m},M_{m}),\sigma^{2}) $ denote the vector of model parameters. Below, we  briefly describe the prior specifications stated in \cite{bart} and \cite{cart}.\\

\noindent {\large \textbf{Prior Specifications:}}

\begin{itemize}
\item Prior Independence and Symmetry:
$$p((T_{1},M_{1}),...,(T_{m},M_{m}),\sigma) = \Big[ \prod_{j}{p(M_{j}|T_{j})p(T_{j})} \Big] p(\sigma)$$

where $ p(M_{j}|T_{j}) = \prod_{i}{p(\mu_{ij}|T_{j})}$.
\item Recommended number of trees: m=200 \citep{bart} and m=50 \citep{bartmachine}
\item Tree prior $p(T_{j})$, is characterised by three aspects: \begin{enumerate}
\item The probability that a node at depth $d=0,1,...$ is non-terminal, which is assumed to have the form $\alpha{(1+d)}^{-\beta}$, where $\alpha \in (0,1)$~ and $\beta \geq 0$. (recommended values are $\alpha=0.95$ and $\beta=2$)
\item The distribution on the splitting variable assignments at each interior node which is recommended to have a uniform distribution.
\item The distribution on the splitting rule assignment in each interior node, conditional on the splitting variable which is also recommended to have a uniform distribution.
\end{enumerate}
\item The conditional prior for $\mu_{ij}$ is  $\mathcal{N}(\mu_{\mu}, \sigma_\mu^2)$ such that:
$$ \left\{ 
  \begin{array}{ l}
    m\mu_{\mu} - k\sqrt{m}\sigma_{\mu} = y_{min}\\
    m\mu_{\mu} + k\sqrt{m}\sigma_{\mu} = y_{max}
  \end{array} \right.$$
  with $k=2$ recommended.
\item The prior for $\sigma^{2}$ is $\mbox{Inv-Gamma}(\frac{\nu}{2},\frac{\nu\lambda}{2})$ where $\nu =3$ is recommended and $\lambda$ is chosen such that $p( \sigma < \hat{\sigma}) = q$ with recommended $q=0.9$ and sample variance $\hat{\sigma}$. 
\end{itemize}
Hence the posterior distribution will have the form:
\begin{multline}
\pi({\theta})=\pi({\theta}|Y,X) \propto \underbrace{\bigg \{ {({\sigma}^{2})}^{-\frac{n}{2}} e^{-\frac{1}{2{\sigma}^{2}} \sum_{i=1}^{n}{{(y_{i} - \sum_{j=1}^{m}{g(x_{i};M_{j},T_{j})} )}^{2} } } \bigg\} }_\text{Likelihood} \times \\  \underbrace{\bigg\{  \underbrace{{({\sigma}^{2})}^{-\frac{\nu}{2}-1} e^{-\frac{\nu \lambda}{2{\sigma}^{2}}} }_\text{Prior of $\sigma^{2}$}\Big[ \prod_{j=1}^{m}{ \sigma_\mu^{-b_{j}} {(2\pi)}^{-\frac{b_{j}}{2}} e^{-\frac{1}{2\sigma_\mu^{2}} \sum_{k=1}^{b_{j}}{{(\mu_{kj} - \mu_{\mu})}^{2} }}} p(T_{j})  \Big] \bigg\}}_\text{Prior}.
\end{multline}
Gibbs Sampling is used to sample from this posterior distribution. The algorithm iterates between the following steps:
\begin{itemize}
\item ${\sigma}^{2} ~| ~(T_{1},M_{1}),...,(T_{m},M_{m}),Y,X  ~  \propto \mbox{Inv-Gamma} (\rho,\gamma)$ \\
where $\rho = \frac{\nu+n}{2} $ and $\gamma = \frac{1}{2}~[~\sum_{i=1}^{n}{{(y_{i} - \sum_{j=1}^{m}{g(x_{i};M_{j},T_{j})} )}^{2} } + \lambda \nu~]$.\\
\item $(T_{j},M_{j}) ~| ~T_{(j)}, M_{(j)},\sigma,Y,X$ which is the same as drawing from the conditional  $(T_{j},M_{j})~ | ~R_{j},\sigma$
where $T_{(j)}$ denotes all trees except the $j$-th tree, and residual $R_{j}$ is defined as:  $$R_{j} =~ g(x;,M_{j},T_{j}) + \epsilon =~ y - \sum_{k \neq j}{g(x;M_{k},T_{k}}) .$$
The sampling of $(T_j,M_j)$ is performed in two steps:
\begin{enumerate}
\item $T_{j}~|~R_{j},\sigma$ and
\item $M_{j}~|~T_{j},R_{j},\sigma$.
\end{enumerate}

Step 2 involves sampling from each component of $M_{j}$ using 
$$\mu_{ij}~|~T_{j},R_{j},\sigma ~\sim~ \mathcal{N}\left(\frac{\frac{\sigma^2}{\sigma_\mu^2}~\mu_\mu ~+~ n_{i}\bar{R}_{j(i)}}{\frac{\sigma^2}{\sigma_\mu^2}~+~n_{i}},\frac{\sigma^2}{\frac{\sigma^2}{\sigma_\mu^2}~+~n_{i}}\right)$$ 
where $\bar{R}_{j(i)}$ denotes the average residual (computed without tree $j$) at terminal node $i$ with total number of observations $n_i$.
The conditional density of $T_{j}$ in step  1  can be expressed as:
\begin{equation} \label{eq:1}
p(T_{j}~|~R_{j},\sigma)~~ \propto ~~ p(T_{j}) \int{p(R_{j}~|~M_{j},T_{j},\sigma)~p(M_{j}~|~T_{j},\sigma)~d{M}_{j}}.
\end{equation}
\end{itemize}
The Metropolis-Hastings (MH) algorithm is then applied to draw $T_{j}$ from (\ref{eq:1}) with four different proposal moves on trees:
\begin{itemize}
\item \textbf{GROW:} growing a terminal node (with probability 0.25);
\item \textbf{PRUNE:} pruning a pair of terminal nodes (with probability 0.25);
\item \textbf{CHANGE:} changing a non-terminal rule (with probability 0.4) \citep[][ change rules only for parent nodes with terminal children]{bartmachine};
\item \textbf{SWAP:} swapping a rule between parent and child (with probability 0.1)  \citep[This proposal move was removed by ][]{bartmachine}.
\end{itemize}
\noindent Detailed derivations involving the Metropolis-Hastings acceptance ratios are described in the Appendix. 

Two existing packages in R, "BayesTree" and "bartMachine", can be used to run BART on any dataset, but as the sample size increases, these packages tend to run slower. In these situations we expect methods such as LISA or CMC to become useful, and for a fair illustration of the advantages gained we have used our own R implementation of BART and applied the same structure to implement LISA and CMC algorithm for BART. The Metropolis-Hastings acceptance ratios for LISA and CMC are also reported in the Appendix. 

As discussed by \cite{cons}, the approximation to the posterior produced by the CMC algorithm  can be poor. Thus, for comparison reasons, we applied both LISA and CMC to BART using a simulated dataset (described further) with $K=30$ batches. Given Theorem 1, since LISA's sub-posterior distributions are asymptotically equivalent to the full posterior distribution, we examined its performance by uniformly taking sub-samples from all its batches as an approximation to full posterior samples. We will see further that LISA with uniform weights produces higher prediction accuracy compared to CMC. However, they both perform poorly in approximating the posterior samples as they generate larger trees and under-estimate $\sigma^2$, which results in over-dispersed posterior distributions.  

The following sub-section discusses a modified version of LISA for BART which will have significant improvement in performance. 

\subsection{\textbf{Modified LISA for BART}} 

The under estimation of $\sigma^{2}$ when applying LISA to BART is similar to the problem encountered when using LISA for the linear regression model discussed in Section \ref{linear}. This is not a coincidence since 
BART is also a linear regression model, albeit one where the set of  independent variables is determined through a highly sophisticated process.  We will show below that when applying a similar variance adjustment to the one {discussed in Section \ref{linear}}, the Modified  LISA (modLISA) for BART will exhibit superior computational and statistical efficiency compared to either LISA or CMC. 

Just like in the regression model we ``correct''  the sampling algorithm by adjusting the residual variance.  We start with the conditional distribution of tree $j$ from expression \eqref{eq:1} which takes the form $$p(T_{j}~|~R_{j},\sigma)~~ \propto ~~ p(T_{j}) \int{p(R_{j}~|~M_{j},T_{j},\sigma)~p(M_{j}~|~T_{j},\sigma)~d{M}_{j}}.$$ Note that only the conditional distribution of the residuals, $R_{j}~|~M_{j},T_{j},\sigma$ is affected by the modifications brought by LISA. The Metropolis-Hastings acceptance ratios for tree proposals contain  three parts: the transition ratio, the likelihood ratio and the tree structure ratio. The modifications brought by LISA will influence only  the  likelihood ratio which is constructed from the conditional distributions of residuals. Consider the likelihood ratio for GROW proposal in LISA  (full details are presented in the  Appendix)
\begin{multline}\frac{P(R~|~T_{*},\sigma^{2})}{P(R~|~T,\sigma^{2})}=
\sqrt{\frac{\sigma^{2}(\sigma^{2}+{K}n_{l}\sigma_\mu^{2})}{(\sigma^{2}+{K}n_{l_{L}}\sigma_\mu^{2})(\sigma^{2}+{K}n_{l_{R}}\sigma_\mu^{2})}}~\times~\\\\ \exp\bigg\{ \frac{{K^2} \sigma_\mu^{2}}{2\sigma^{2}}\bigg[ \frac{(\sum_{i=1}^{n_{l_{L}}}{R_{l_{L},i}})^{2}}{\sigma^{2}+{K}n_{l_{L}}\sigma_\mu^{2}}+\frac{(\sum_{i=1}^{n_{l_{R}}}{R_{l_{R},i}})^{2}}{\sigma^{2}+{K}n_{l_{R}}\sigma_\mu^{2}} - \frac{(\sum_{i=1}^{n_{l}}{R_{l,i}})^{2}}{\sigma^{2}+{K}n_{l}\sigma_\mu^{2}}   \bigg]  \bigg\}
\label{eq:modif}
\end{multline}
where $n_{l}$ is the total number of observations from batch-data that end up in terminal node $l$. The newly grown tree, $T_*$, splits terminal node $l$ into two terminal nodes (children) $l_L$ and $l_R$, which will also divide $n_{l}$ to $n_{l_L}$ and $n_{l_R}$ which are the corresponding number of observations in each new terminal node. By factoring out $K$ in \eqref{eq:modif}, we can rewrite it as
\begin{multline}\frac{P(R~|~T_{*},\sigma^{2})}{P(R~|~T,\sigma^{2})}=
\sqrt{\frac{\frac{\sigma^{2}}{K}(\frac{\sigma^{2}}{K}+n_{l}\sigma_\mu^{2})}{(\frac{\sigma^{2}}{K}+n_{l_{L}}\sigma_\mu^{2})(\frac{\sigma^{2}}{K}+n_{l_{R}}\sigma_\mu^{2})}}~\times~\\\\ 
\exp\bigg\{ \frac{\sigma_\mu^{2}}{2\frac{\sigma^{2}}{K}}\bigg[ \frac{(\sum_{i=1}^{n_{l_{L}}}{R_{l_{L},i}})^{2}}{\frac{\sigma^{2}}{K}+n_{l_{L}}\sigma_\mu^{2}}+\frac{(\sum_{i=1}^{n_{l_{R}}}{R_{l_{R},i}})^{2}}{\frac{\sigma^{2}}{K}+n_{l_{R}}\sigma_\mu^{2}} - \frac{(\sum_{i=1}^{n_{l}}{R_{l,i}})^{2}}{\frac{\sigma^{2}}{K}+n_{l}\sigma_\mu^{2}}   \bigg]  \bigg\}.
\label{eq:BatchSingle}
\end{multline}
Expression \eqref{eq:BatchSingle} {shows a similar residual variance that is $K$ times smaller in each batch, and hence following the discussion in Section \ref{linear},
to achieve similar variance,} 
we need to modify LISA for BART by {adding the intermediate step} $\tilde \sigma^2 = K \sigma^2$ when updating \textit{trees}  {in each batch,} and then taking a weighted average combination of sub-samples (similar to Bayesian linear regression).  As in Section \ref{linear},   we don't apply any changes when updating $\sigma^2$.
All our numerical experiments show that
modLISA {also generates accurate predictions in BART, since} the modification corrects the bias in the  posterior draws of $\sigma^2$ and properly calibrates the size of the trees. 

The BART algorithm will split the covariate space into disjoint subsets and on each subset a regression with only an intercept is fitted. Therefore, as suggested by the discussion in 4.2 the weight assigned to each batch will be proportional to the estimate of $\sigma^2$ in that batch. In the following sections we examine the improvement brought by modLISA when compared to LISA and CMC.

\section{Numerical Experiments}
\label{sec:exp}

\subsection{\textbf{The Friedman's function}}\label{simulated}
We have simulated data of  size  $N=20,000$ from Friedman's test function \citep{friedman} $$f(x) = 10\sin(\pi x_{1} x_{2}) + 20(x_{3}-0.5)^{2} + 10 x_{4} + 5x_5,$$ where the  covariates  $x=(x_{1}, \ldots, x_{10})$ are simulated independently from a $U(0,1)$ and $y \sim \mathcal{N}(f(x),\sigma^2)$ with $\sigma^2=9$. Note that five of the ten covariates are unrelated to the response variable.  We have also generated test data containing 5000 cases. We apply BART to this simulated dataset using the default hyperparameters stated in Section \ref{bart_sec} with $m=50$ to generate posterior draws of $(T,M,\sigma^2)$ that, in turn, yield posterior draws for $f(x)$ using the approximation $\hat{f}(x) \approx \sum_{j=1}^{m}g(x;\hat{T_j},\hat{M_j})$ for each  $x=(x_{1}, \ldots, x_{10})$. Since in this case the true $f$ is known, one can compute the root mean squared error (RMSE) using average posterior draws of $\hat{f}(x)$ for each $x$ (i.e. $\overbar{\hat{f}(x)}$), as an estimate to measure its performance, i.e. RMSE $= \sqrt{\frac{1}{N}\sum_{i=1}^{N}{(f(x_i) - \overbar{\hat{f}(x_i)})^2}}$.  It is known that SingleMachine BART may mix poorly when it is run on an extremely large dataset with small residual variance. However since the data simulated is of reasonable  size  and  $\sigma$  is not very small the SingleMachine BART is expected to be a good benchmark for comparison \citep[see discussion in][]{pratola}.

\subsubsection{Comparison of modLISA with Competing Methods}\label{comparison_modLISA}
We have implemented modLISA, LISA, and CMC for BART with $K=30$ batches on the simulated data for 5000 iterations with a total of 1000 posterior draws. Table \ref{modLISA} shows results from all methods including the SingleMachine which runs BART on the full dataset using only one machine. Results are averaged over three different realizations of train and test data, and include  the Train and Test RMSE for each method, along with tree sizes, $\sigma^2$ estimates and their 95\% Credible Intervals (CI).   The summaries presented in Table \ref{modLISA} show that although LISA has better prediction performance than CMC, it does a terrible job at estimating $\sigma^2$, its estimate  being orders of magnitude smaller than the one produced by CMC. 
 CMC and LISA both generate larger trees compared to SingleMachine, with CMC generating trees that are ten times larger  than LISA's. 
 One can see that  modLISA  with weighted averages  dominates both CMC and LISA  across all performance indicators since it yields the smallest RMSE, the smallest tree size, and less biased $\sigma^2$ estimates. Generally, modLISA  generates results that are by far the closest to the ones produced by SingleMachine.




\begin{table*}[h]
\footnotesize
\centering
\caption{Comparing Train \& Test RMSE, tree sizes, and average post burn-in $\hat{\sigma}^2$ with 95\% CI in each method for $K=30$ to SingleMachine BART (all results are averaged over three different realizations of data).}
\label{modLISA}
\begin{tabular}{crrrrrc}
\hline \\
Method  & \multicolumn{1}{c}{TrainRMSE} & \multicolumn{1}{c}{TestRMSE} & \multicolumn{1}{c}{Tree Nodes} & \multicolumn{1}{c}{Avg $\hat{\sigma}^2$} & \multicolumn{1}{c}{95\% CI for $\sigma^2$}  \\ \\
\hline
$CMC$ & 2.73 & 2.94  & 602 & 1.91 &  {[1.45~,~2.88]}\\ 
$LISA ~(unif ~ wgh)$ & 1.18 & 1.19  & 55 & 0.001  & {[0.0009~,~0.0011]}\\ 
$modLISA~(wgh~avg)$ & 0.57 & 0.59  & 7 & 7.97 &  {[7.87~,~8.08]} \\ 
$SingleMachine $ & 0.55 & 0.56  & 7 & 9.04  & {[8.85~,~9.21]}\\ 
 \hline
\end{tabular}
\end{table*}

\begin{table*}[h]
\centering
\caption{Average acceptance rates of tree proposal moves.}
\label{acc}
\begin{tabular}{crrrc}
\hline \\
Method & \multicolumn{1}{c}{GROW} & \multicolumn{1}{c}{PRUNE}   & \multicolumn{1}{c}{CHANGE} \\ \\
\hline
$CMC$ &  21\% &   0.03\% &  34\%  \\ 
$LISA$ &  1.8\% &   0.5\% &  1.6\%  \\ 
$modLISA$ & 20\% &   26\% &  19\%  \\ 
$SingleMachine$ &  9\% &  10\% &  6\%  \\  
 \hline
\end{tabular}
\end{table*}

The size of trees produced by each method is in sync with   the average acceptance rates of each tree proposal move  shown in Table \ref{acc}.  It is noticeable the difference between CMC and LISA 's average acceptance rates between growing a tree and pruning one. 
On the other hand, modLISA has overall larger acceptance rates with the smallest relative absolute difference between growing and pruning probabilities compared to LISA and CMC ($6/20 = 23.1$\% for modLISA, $98.6$\% for CMC, and  $72.2$\% for LISA) and is closest to SingleMachine ($10$\%). Overall, modLISA induced a significant reduction in tree sizes by preserving a balance between growing and pruning trees which also improves exploring the posterior distribution.

\begin{table*}[h]
\footnotesize
\centering
\caption{Average coverage for  95\% credible intervals constructed for training (TrainCredCov) and test (TestCredCov) data and 95\% prediction intervals constructed for training (TrainPredCov) and test (TestPredCov)  data. The prediction interval coverage is estimated based on 1000 iid samples,  $N=20,000$ and $K=30$. All results are averaged over three different realizations of data.}
\label{CovmodLISA}
{
\begin{tabular}{crrrc}
\hline \\
Method  &  \multicolumn{1}{c}{TrainPredCov} & \multicolumn{1}{c}{TestPredCov} &  \multicolumn{1}{c}{TrainCredCov} & \multicolumn{1}{c}{TestCredCov}  \\ \\
\hline
$CMC$ & 45.71 \% & 47.83 \%  & 81.95  \% & 99.99 \% \\ 
$LISA ~(unif ~ wgh)$ &  1.54 \% & 1.54 \%  & 100 \% & 100 \%\\ 
$modLISA~(wgh~avg)$ &  92.93 \% & 92.91 \%  & 60.88 \% & 58.45 \%\\ 
$SingleMachine $ & 94.67 \% & 94.65 \% & 71.58 \% & 71.54 \%\\ 
 \hline
\end{tabular}
}
\end{table*}

For a more clear comparison of the methods, Table \ref{CovmodLISA} shows the average  coverage of 95\% credible intervals (CI) for predictors $f(x)$ and 95\% prediction intervals (PI) for future responses $y$. The calculations are made for the values of $y$ and $f(x)$ in the training and test data sets.

The coverage for CI is  given by the averaging for all training or test data of 
$$ { \#\{ f(x_i) \in \hat I_{f(x_i)}: \; 1\le i\le N\} \over N }$$
where $\hat I_f(x_i)$ is the CI for $f(x_i)$ estimated based on the MCMC draws from $\pi$.
 
    The coverage of  the PI corresponding to a pair $(y_i,f(x_i))$ is given by the proportion  of 1000 iid samples generated from the true generative model $N(f(x_i),\sigma^2)$ that fall between its limits, i.e. the average over training or test data  of 
    $$ { \# \{\tilde y_j \in \hat J_{y_i}: \; \tilde y_j \stackrel{iid}{\sim} N(f(x_i),\sigma^2) 1\le j \le 1000\} \over 1000},$$ 
    where $\hat J_{y_i}$ is the PI for $y_i$. The PI coverage in modLISA and SingleMachine are very close to nominal and vastly outperform the PI's produced using LISA or CMC. 

One can see that coverages of the CI built via  CMC and LISA are high, which is not surprising since both algorithms produce over-dispersed approximations to the conditional distributions of $f(x)$. Our observation is that the CI for LISA and CMC are too wide to be practically useful. Also, modLISA and SingleMachine have much lower CI coverage than nominal which, as pointed out by one of the referees, is also expected due to the systematic bias induced by the discrepancy between the functional forms of the true predictor (continuous) and of the one fitted by  BART (piecewise constant).  Thus, the  CI for $f(x)$ will exhibit poor coverage as they are centered around a biased estimate of $f(x)$.  

In order to verify that this is indeed the case we  have generated a dataset of size 20,000 from the piecewise constant function:
$$
 f(x) = 
   \one_{[0,0.2)} (x_1)+ 2 \cdot \one_{[0.2,0.4)}(x_1)+3\cdot \one_{[0.4,0.6)}(x_1)+4\cdot \one_{[0.6,0.8)}(x_1)+5 \cdot \one_{[0.8,1)}(x_1) 
  $$
where $\one_{[a,b)}(x)=1$ if $x\in[a,b)$ and 0 otherwise,   $x=(x_{1},\ldots,x_{10}) \in (0,1)^{10}$ is a ten-dimensional input vector, with $x_i \sim \ru(0,1)$, and $y \sim \mathcal{N}(f(x),9)$. Additional 5000 data have also been simulated as test cases. Table \ref{piecewise} summarizes the analysis with $K=30$ and  confirms a sharp decrease in RMSEs even though the noise has the same variance $\sigma^2=9$.  We note that the coverages of CI build under modLISA and SingleMachine  are much higher.

\begin{table*}[h]
\footnotesize
\centering
\caption{Comparing test data  RMSE and coverage of 95\% credible intervals for piecewise $f(x)$ with $N=20,000$ and $K=30$.}
\label{piecewise}
{
\begin{tabular}{crrrrc}
\hline \\
Method  &  \multicolumn{1}{c}{TestRMSE} & \multicolumn{1}{c}{TestCredCov}   \\ \\
\hline
$CMC$ & 1.35  & 100 \%  \\ 
$LISA ~(unif ~ wgh)$ & 0.94 & 100 \% \\ 
$modLISA~(wgh~avg)$ & 0.24  & 90.16 \%\\ 
$SingleMachine $ & 0.15 & 98.76 \%\\ 
 \hline
\end{tabular}
}
\end{table*}

\begin{figure}[!h]
\centering
\begin{subfigure}{.4\textwidth}
  \centering
  \includegraphics[width=1\linewidth]{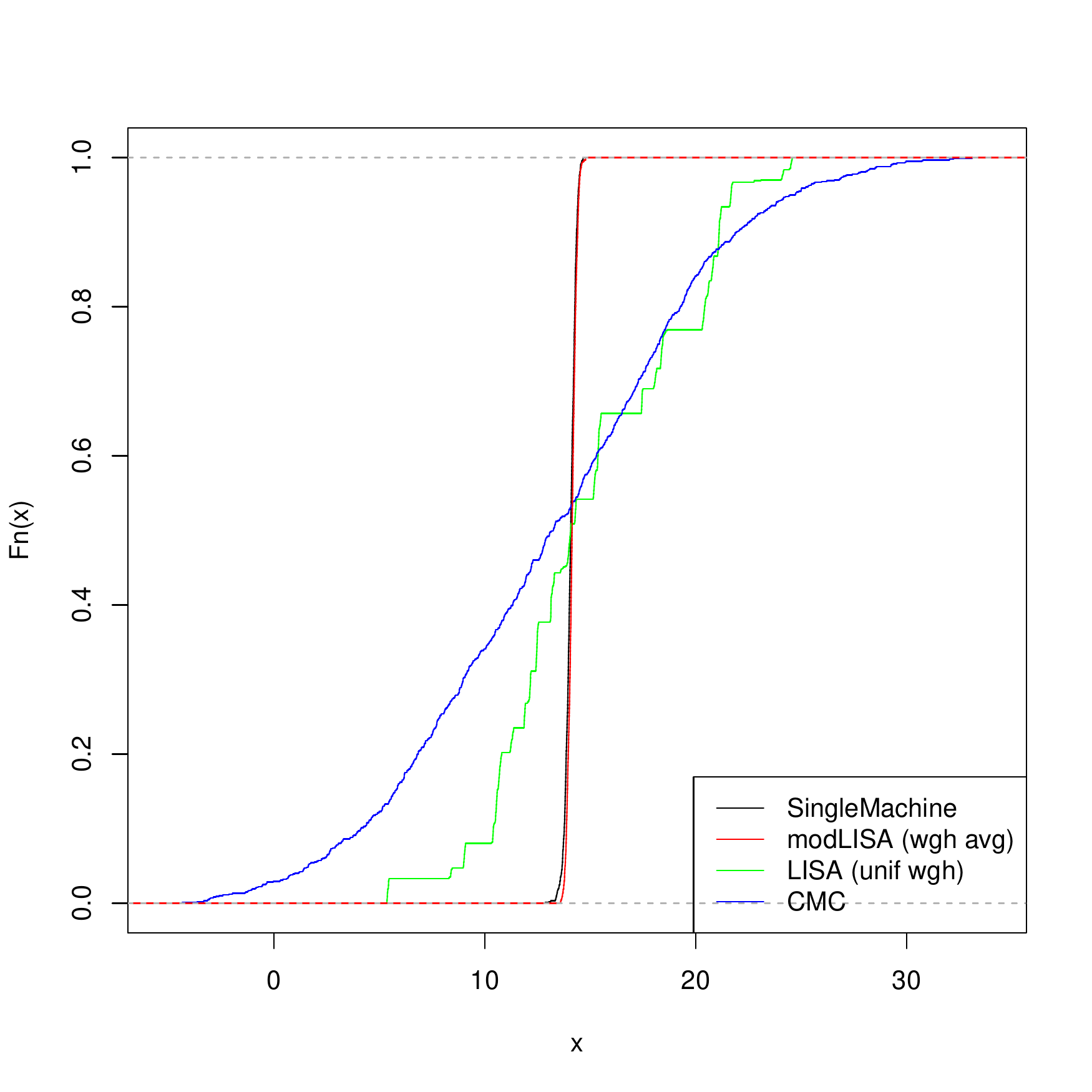}
  \caption{Test $x^*=1000$, $f(x^*) = 13.8$}
  \label{fig:all760}
\end{subfigure}
\begin{subfigure}{.4\textwidth}
  \centering
  \includegraphics[width=1\linewidth]{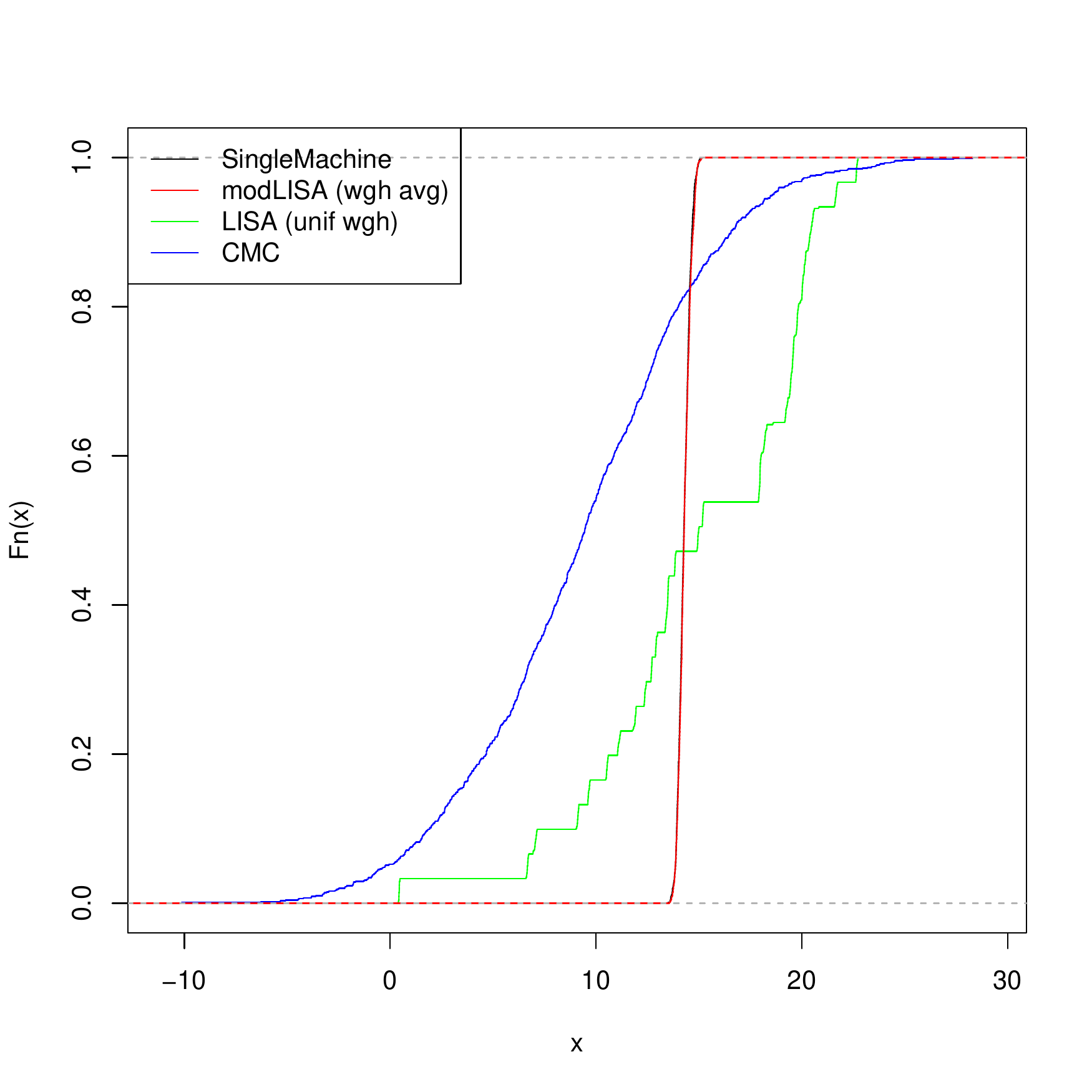}
  \caption{Test $x^* =  2000$, $f(x^*) = 14.4$}
  \label{fig:all999}
\end{subfigure}
\begin{subfigure}{.4\textwidth}
  \centering
  \includegraphics[width=1\linewidth]{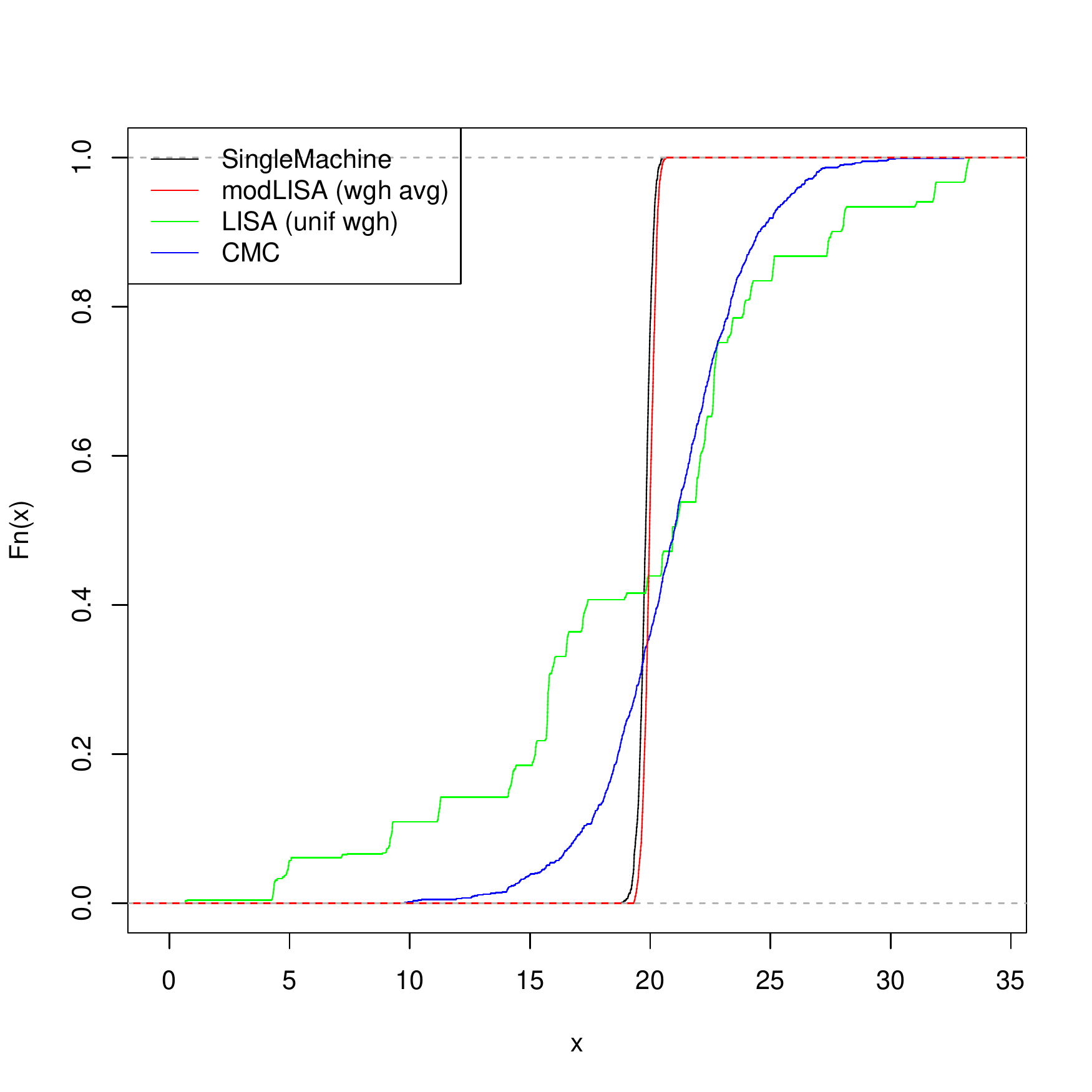}
  \caption{Training $x = 999$, $f(x) = 19.8$}
  \label{fig:all2001}
\end{subfigure}
\begin{subfigure}{.4\textwidth}
  \centering
  \includegraphics[width=1\linewidth]{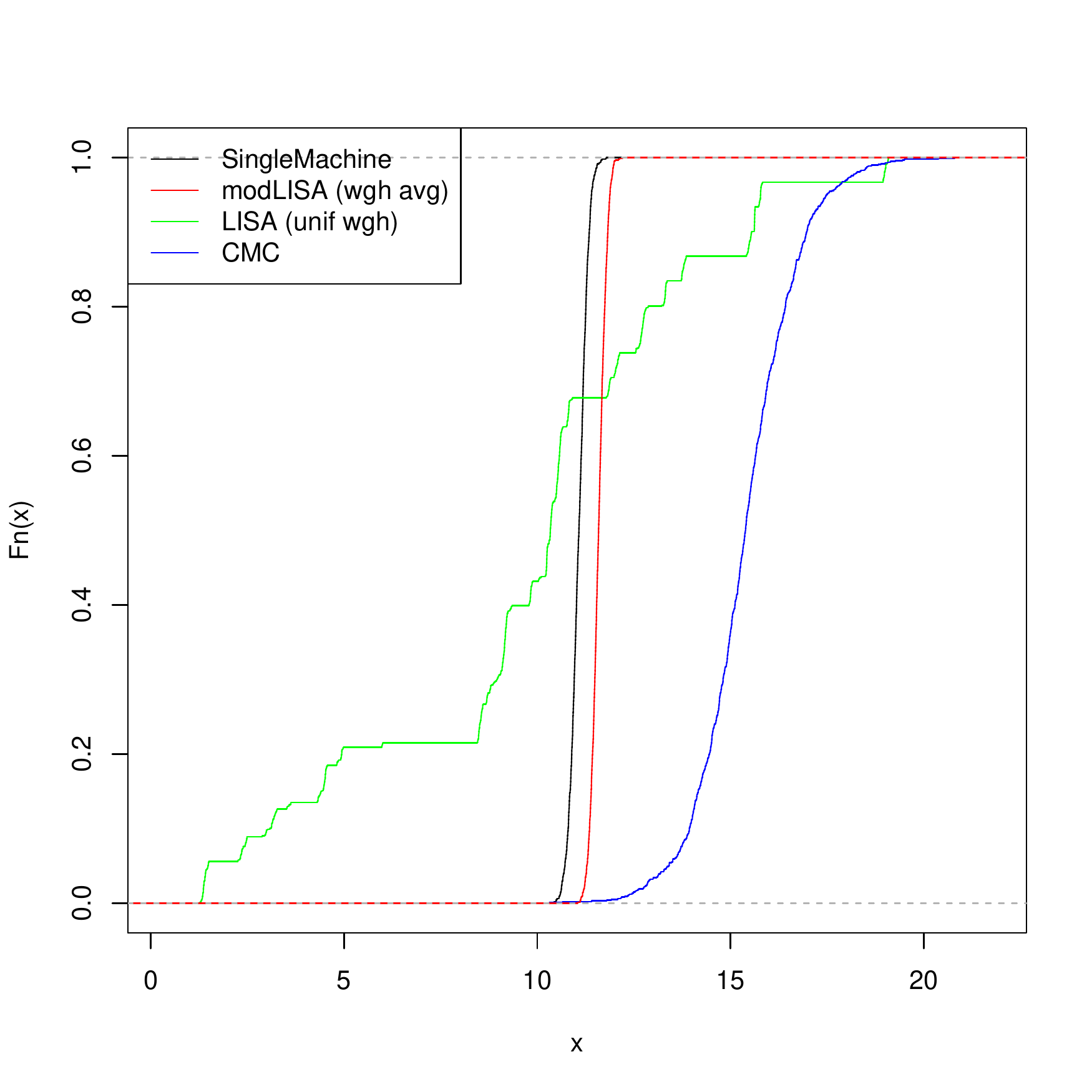}
  \caption{Training $x = 2001$, $f(x) = 11.2$}
  \label{fig:all18000}
\end{subfigure}
\caption{Empirical distribution functions of $\hat{f}(x)$ obtained from  MCMC samples produced by modLISA (red line), LISA (green line), CMC (blue line), and SingleMachine BART (black line) for two different pairs of training and test data. In this example $K=30$.}
\label{fig:4obsall}
\end{figure}

\subsubsection{Comparison with SingleMachine BART}
 In order  to investigate the closeness of posterior samples in each method to the SingleMachine BART, we have plotted in Figure \ref{fig:4obsall}  the empirical distribution functions of $\hat{f}(x)$ generated from each algorithm for two pairs of observations in the training and test dataset. One can see that the empirical distribution functions in LISA and CMC don't match the ones from SingleMachine, and look over-dispersed. However, the empirical distribution functions in modLISA weighted average look much closer to SingleMachine with a  slight shift in location.

\begin{figure}[!h]
\centering
\begin{subfigure}{.45\textwidth}
  \centering
  \includegraphics[width=1\linewidth]{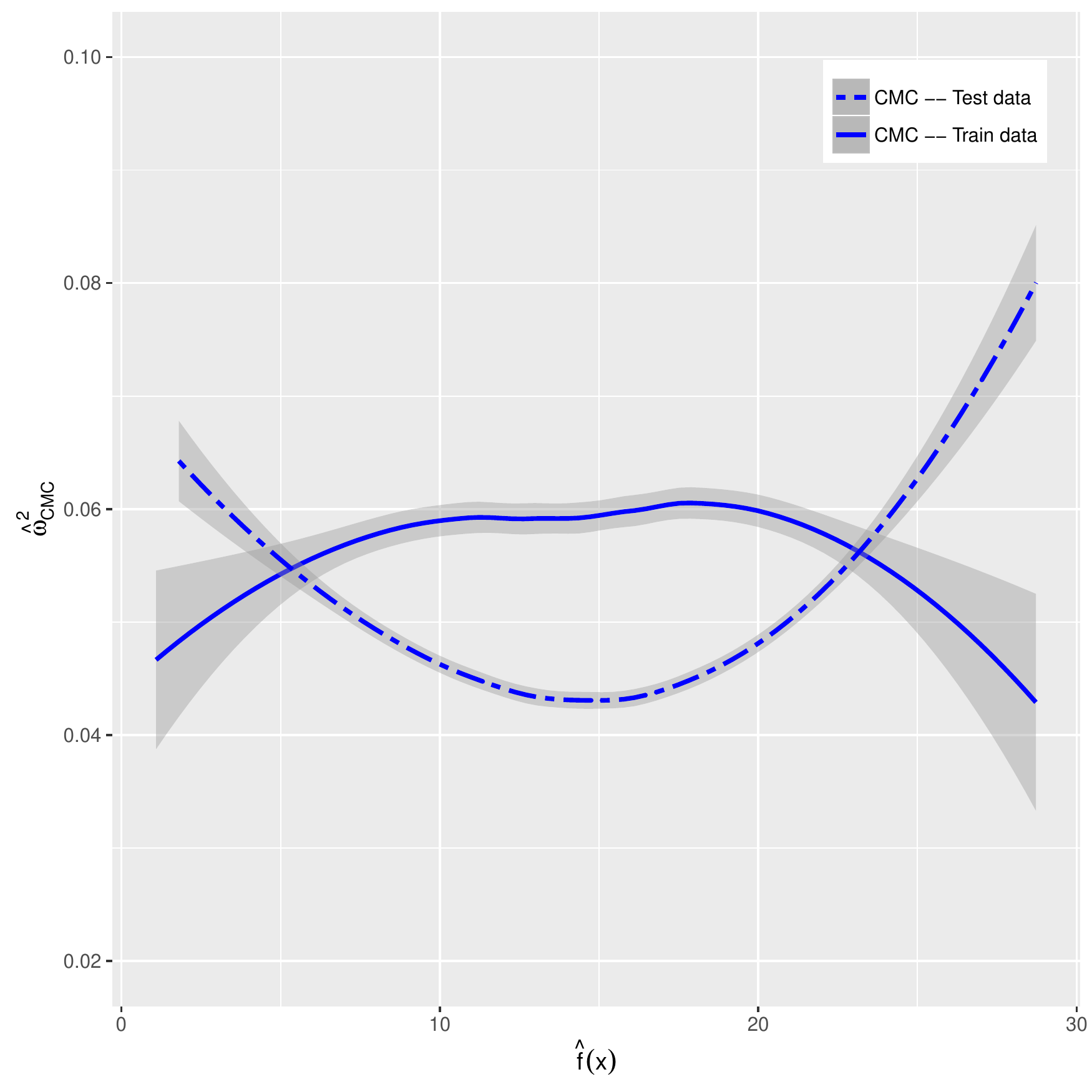}
  \caption{}
  \label{fig:cons}
\end{subfigure}
\begin{subfigure}{.45\textwidth}
  \centering
  \includegraphics[width=1\linewidth]{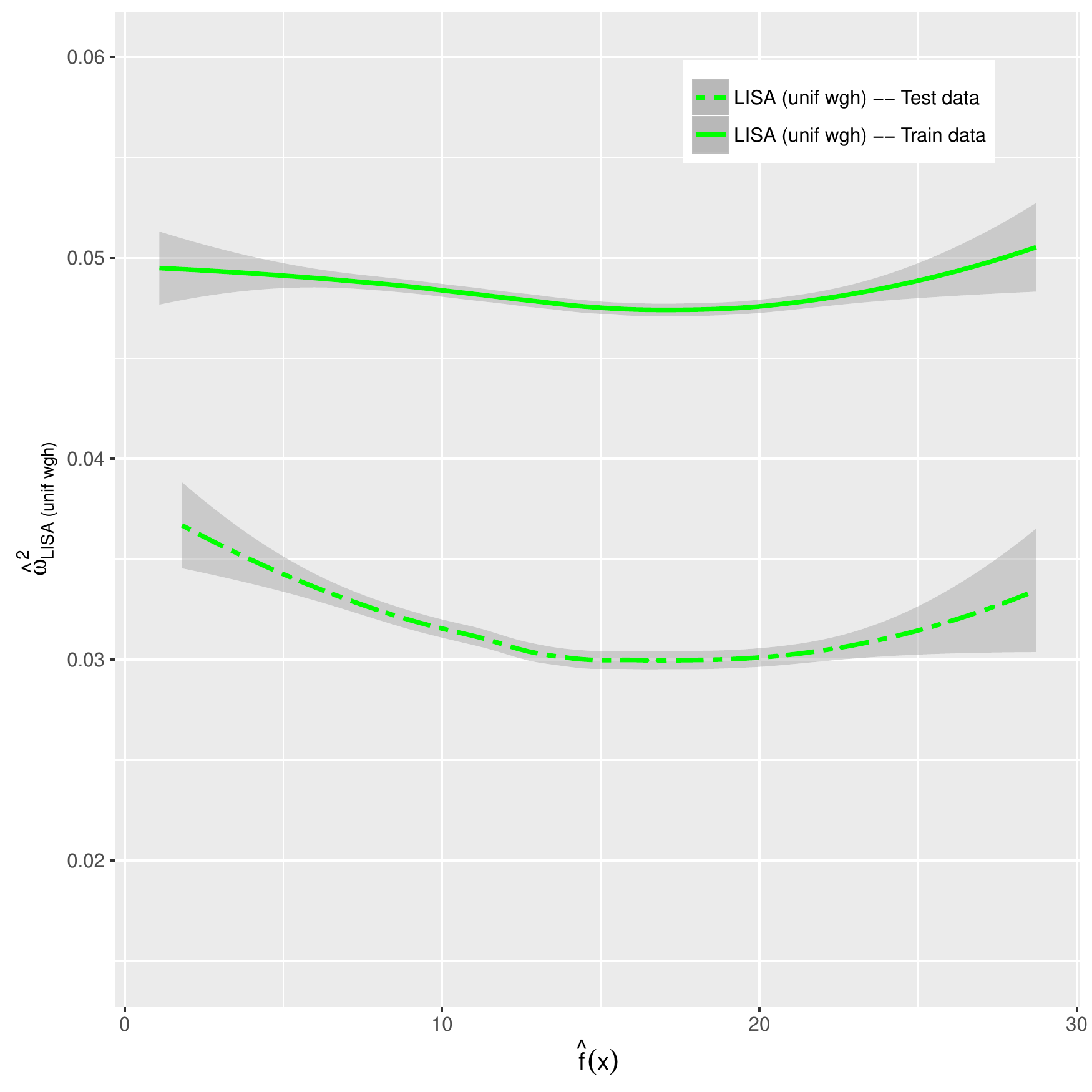}
  \caption{}
  \label{fig:LISAwavg}
\end{subfigure}
\begin{subfigure}{.45\textwidth}
  \centering
  \includegraphics[width=1\linewidth]{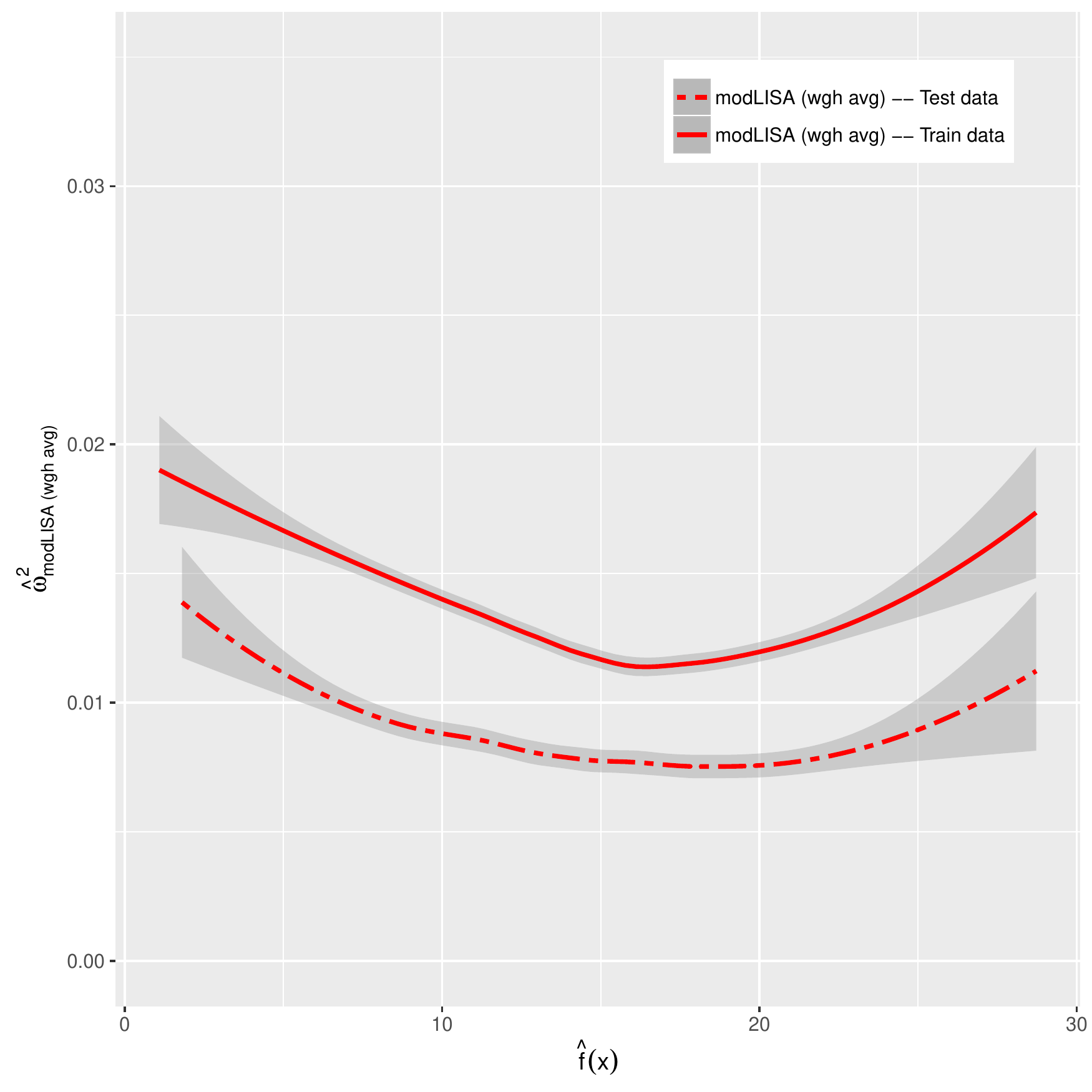}
  \caption{}
  \label{fig:LISAcomb}
\end{subfigure}
\caption{ Blue lines: Fitted polynomial trends (for both train and test data) of average squared difference between empirical distribution functions of SingleMachine and the following: (a) CMC for training (solid line) and test (dot dashed line) data, (b) LISA with uniform weights for training (solid line) and test (dot dashed line) data and (c) modLISA with weighted average for training (solid line) and test (dot dashed line) data. The difference is plotted against  the mean prediction $\hat{f}(x)$ produced by SingleMachine.  Grey areas represent the $95\%$ credible intervals constructed from 100 independent replicates. }
\label{fig:MmodLISA}
\end{figure}

In order to  assess the  performance of the sampling procedures considered, we use the Cram\'er-von Mises distance to assess the difference between empirical distribution functions. This distance is defined to be $\omega^2 = \int_{-\infty}^{\infty}(F_{n}(x) - F(x))^2 dF(x)$ where in our case we assume $F(x) = F_{BART}(x)$ to be the empirical distribution function generated from posterior samples in SingleMachine BART and $F_{n}(x)$ is similarly computed for the alternative method that is considered for comparison. 

Using a set of $T=1000$ equispaced points, we compute the average squared difference between the single machine and all other alternative methods for each observation in the dataset. To illustrate, for LISA we estimate $\omega$ using  ${\hat{\omega}^2}_{LISA} = \frac{1}{T}\sum_{j=1}^{T}(F_{LISA}(t_j) - F_{BART}(t_j))^2$.

Figure \ref{fig:MmodLISA} is comparing the fitted polynomial trends of $\hat{\omega}^2$ (in each method) versus mean predicted $\hat{f}(x)$ in SingleMachine with their corresponding $95\%$ credible regions (for both train and test data). Clearly in LISA and modLISA, there are small variations around the trends with no significant changes in values of $\hat{\omega}^2$ among different mean predicted $\hat{f}(x)$, which specifies consistency within different train or test observations. In addition, the gap between trends from train and test data indicate that the average distance between LISA/modLISA and SingleMachine's distributions are smaller for test data compared to train data. Furthermore, there are still small variations seen around CMC's trends, but with slight changes in values of $\hat{\omega}^2$ among different mean predicted $\hat{f}(x)$, especially for the test dataset which indicates inconsistency within different observations.

\begin{figure}[!h]
\begin{subfigure}{0.5\textwidth}
  \centering
  \includegraphics[width=1\linewidth]{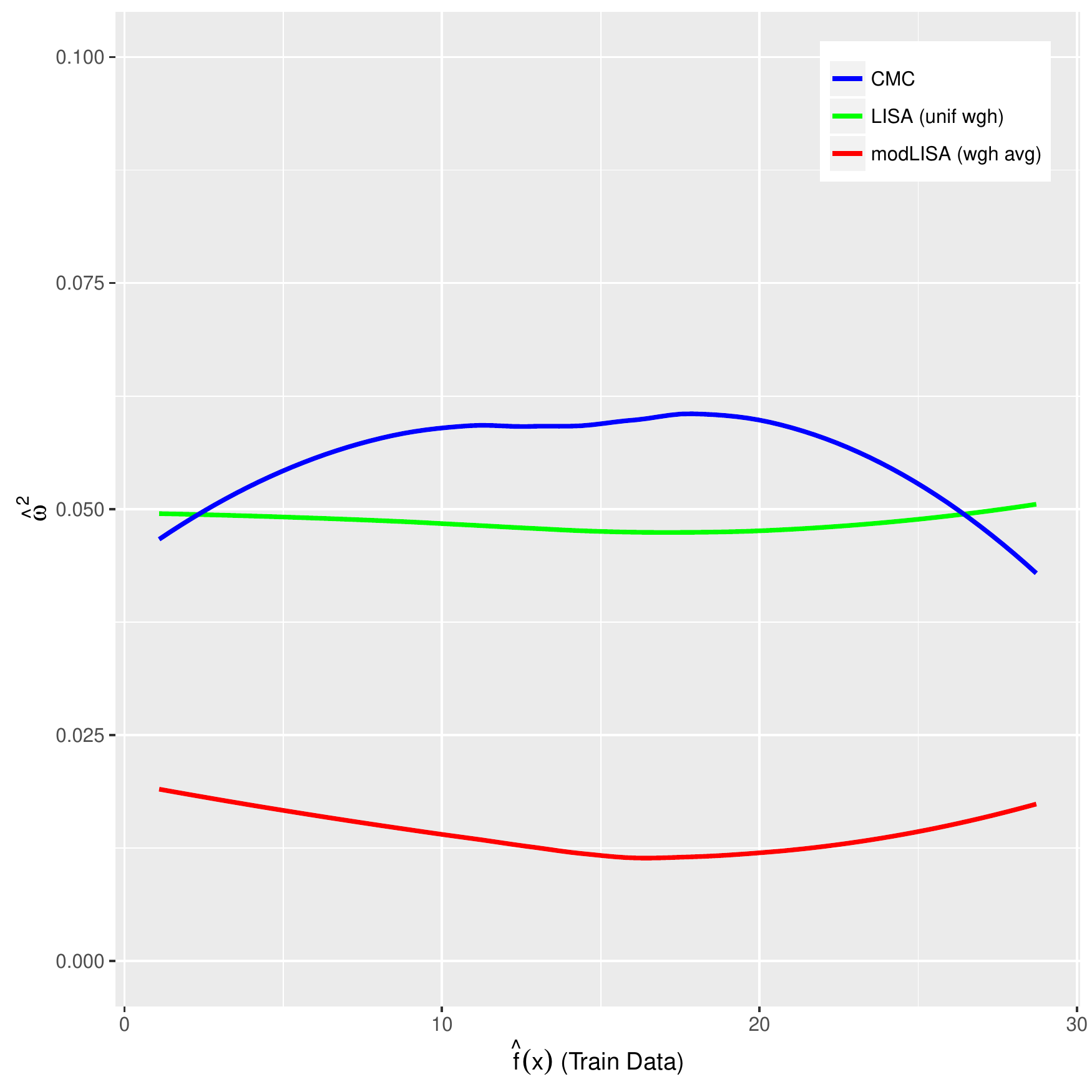}
  \caption{Train data}
\end{subfigure}
\begin{subfigure}{0.5\textwidth}
  \centering
  \includegraphics[width=1\linewidth]{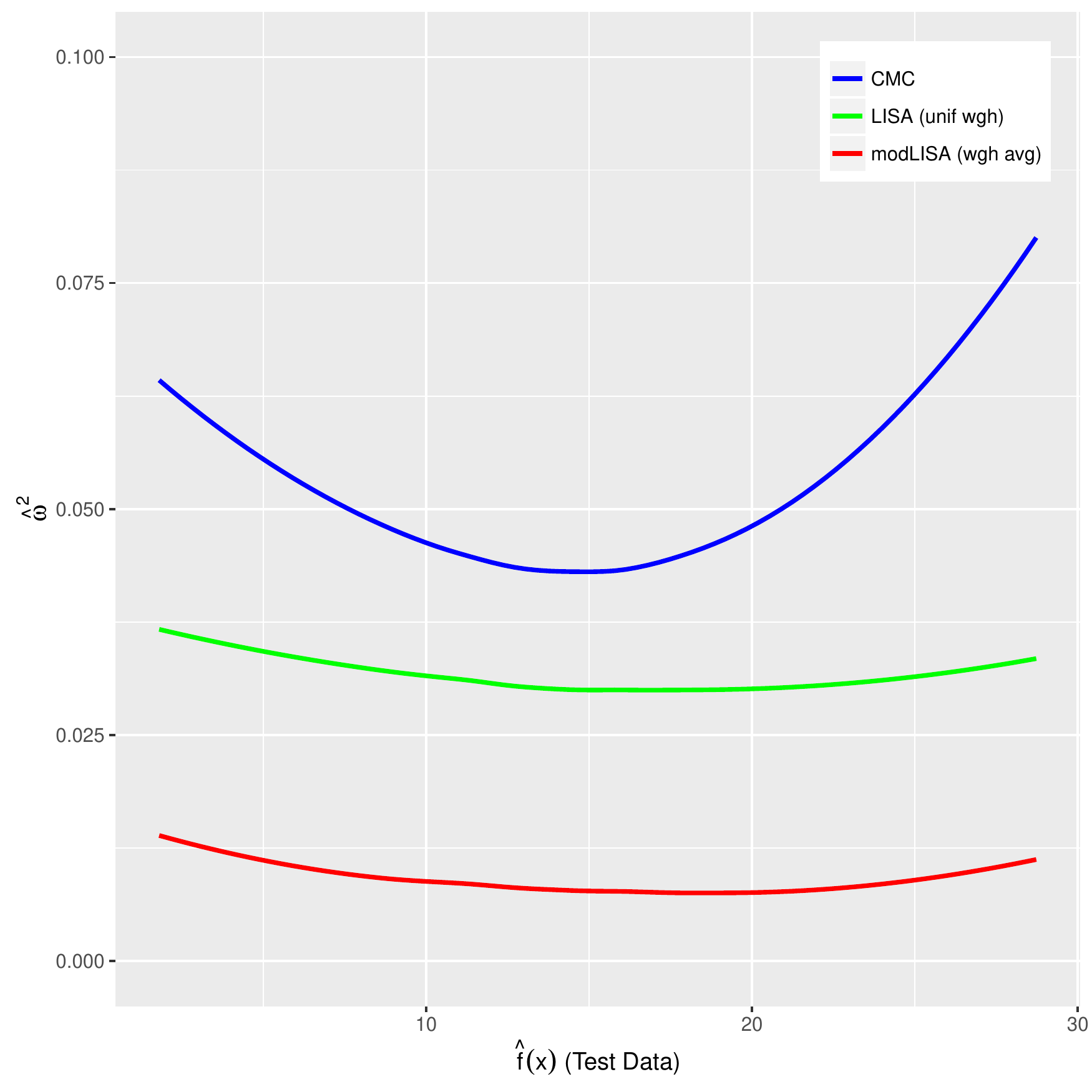}
  \caption{Test data}
\end{subfigure}
\caption{Comparing fitted polynomial trends of average squared difference in empirical distribution functions of each method and SingleMachine, as functions of mean predicted $\hat{f}(x)$ in SingleMachine (for both train and test data).}
\label{fig:allmodLISA}
\end{figure}

To emphasize the difference in performance between modLISA and its competitors, Figure \ref{fig:allmodLISA} shows all the fitted polynomial trends without their credible regions for the train and test data. One can see that there is a large gap between ${\hat{\omega}}^2$ values in modLISA weighted average and other alternative methods (for both train and test data), with modLISA having the lowest value. Thus the weighted average of samples produced by modLISA yields the closest results to SingleMachine. This can also be justified by comparing average ${\hat{\omega}}^2$ over all train observations for each trend which is calculated to be $0.013$ for modLISA that is significantly smaller than $0.059$, $0.048$ for CMC, and LISA, respectively. Similarly, the average ${\hat{\omega}}^2$ over test data are $0.008$, $0.047$, and $0.031$ for modLISA, CMC, and LISA respectively, which again the smallest value is seen in modLISA.
We conclude that modLISA weighted average sample  yields the closest representation of the BART posterior and exhibits the best performance compared to alternative methods.

At last we  compare run time per iteration for each method so we can draw some conclusions regarding the overall efficiency.
 
\subsubsection{\textbf{Run Time Comparisons}}
The main goal of methods such as LISA and CMC was to reduce run times regarding big data applications. Here we have compared average run times per iteration (from one processor) for each method using our implementation of BART. 
\begin{table*}[h]
\centering
\caption{Running times for CMC, LISA, modLISA and SingleMachine when $K=30$.}
\label{speed}
\begin{tabular}{crrc}
\hline \\
Method & \multicolumn{1}{c}{Avg Time per iteration (Secs)}  & \multicolumn{1}{c}{Speed-up}  \\ \\
\hline
$CMC$ & 11.99 & 31\% \\
$LISA$ & 5.04 & 71\% \\
$modLISA$ & 1.81 & 90\% \\
$SingleMachine$ & 17.28 & ----- \\
 \hline
\end{tabular}
\end{table*}

As it is seen in Table \ref{speed}, modLISA, LISA and CMC with $K=30$ are all faster compared to SingleMachine since they are influenced by the smaller subsets of data used. However, since LISA and CMC generate much larger trees, they become slower compared to modLISA which is the fastest method. We have also reported the speed-up percentages with respect to SingleMachine, which is defined to be $(1-t/17.28) \times 100\%$ where $t$ is the average time per iteration in each method. Clearly, CMC shows the smallest speed-up ($31\%$) while modLISA has the highest ($90\%$).

\subsection{Additional Considerations}

\subsubsection{\textbf{Effect of \textbf{$N$} (number of training data) on Posterior Accuracy}}
To see how the number of training data ($N$) can effect the posterior accuracy, we have examined the performance of all methods when $N$ is increased to 60,000 while we keep the same number of batches $K=30$. Tables \ref{60Kdata} shows the results of 1000 posterior samples generated from fitting the BART model to the training set with additional 5000 data considered as test cases.
\begin{table*}[h]
\footnotesize
\centering
\caption{Tree sizes, estimates and 95\% credible intervals for $\sigma^2$, RMSE for training data (TrainRMSE) of size $N=60,000$ and for test data (TestRMSE)  of size $5,000$ for each method run with   $K=30$.}
\label{60Kdata}
\begin{tabular}{crrrrrc}
\hline \\
Method  & \multicolumn{1}{c}{TrainRMSE} & \multicolumn{1}{c}{TestRMSE} & \multicolumn{1}{c}{Tree Nodes} & \multicolumn{1}{c}{Avg $\hat{\sigma}^2$} & \multicolumn{1}{c}{95\% CI for $\sigma^2$}   \\ \\
\hline
$CMC$ & 2.85 & 5.56 & 983 & 0.48 & {[0.30 ~,~ 0.66]} \\ 
$LISA ~(unif ~ wgh)$ & 1.17 & 1.19 & 125 & 0.0003 & {[0.00031 ~,~ 0.00035]} \\
$modLISA~(wgh~avg)$ & 0.41 & 0.42  & 7 & 8.82 & {[8.79 ~,~ 8.86]} \\
$SingleMachine $ & 0.41 & 0.41  & 11 & 9.04 & {[8.94~,~ 9.16]} \\ 
 \hline
\end{tabular}
\end{table*}

\begin{table*}[h]
\footnotesize
\centering
\caption{Average coverage for  95\% credible intervals constructed for training (TrainCredCov) and test (TestCredCov) data and 95\% prediction intervals constructed for training (TrainPredCov) and test (TestPredCov)  data. The prediction interval coverage is estimated based on 1000 iid samples,  $N=60,000$ and $K=30$.}
\label{Cov60Kdata}
{
\begin{tabular}{crrrc}
\hline \\
Method  &  \multicolumn{1}{c}{TrainPredCov} & \multicolumn{1}{c}{TestPredCov} & \multicolumn{1}{c}{TrainCredCov} & \multicolumn{1}{c}{TestCredCov}  \\ \\
\hline
$CMC$ & 25.74 \% & 17.28 \%  & 51.37  \% & 85.92 \% \\ 
$LISA ~(unif ~ wgh)$ & 0.84 \% & 0.84 \% & 100 \% & 100 \%\\ 
$modLISA~(wgh~avg)$ &  94.54 \% & 94.53 \% & 53.68 \% & 52.68 \%\\ 
$SingleMachine $ &  94.83 \% & 94.84 \% & 57.79 \% & 58.90  \%\\ 
 \hline
\end{tabular}
}
\end{table*}

Unsurprisingly,  Tables  \ref{modLISA} and \ref{60Kdata} show that the  RMSE for training and test data in LISA, modLISA, and SingleMachine decrease as $N$ increases. More importantly, while  LISA and CMC estimates for $\sigma^2$ get worse, modLISA generates more accurate estimates of $\sigma^2$ with a larger $N$. 

Trees have a stable size in modLISA, but tend to grow larger in CMC and LISA, as $N$ increases. 
{Table \ref{Cov60Kdata} shows that coverage of PI decreases in CMC and LISA, but increases in modLISA and SingleMachine for larger training data.  We find it particularly promising that  modLISA competes with SingleMachine for larger $N$. Note that, coverage of CI in LISA and CMC are still unreliable because of their over-dispersion, while in modLISA and SingleMachine they decrease as N increases, which is reasonable since larger sample size creates narrow CI that are around a biased $f(x)$ estimate, as discussed in the previous section. Overall, as $N$ increases, modLISA seems to be a more reliable method as it shows a better performance compared to all other alternatives.}

\subsubsection{\textbf{Effect of \textbf{$K$} (number of batches) on Posterior Accuracy}}
To examine the effect of $K$ on posterior accuracy, we have generated 1000 posterior draws for training data of size $N=20,000$   and $K=10$. The test data sample is of size 5,000.

The results are shown in Table \ref{k10}.
\begin{table*}[!h]
\footnotesize
\centering
\caption{Tree sizes, estimates and 95\% credible intervals for $\sigma^2$, RMSE for training data (TrainRMSE) of size $N=20,000$ and for test data (TestRMSE)  of size $5,000$ for each method run with   $K=10$.}
\label{k10}
\begin{tabular}{crrrrrc}
\hline \\
Method  & \multicolumn{1}{c}{TrainRMSE} & \multicolumn{1}{c}{TestRMSE} & \multicolumn{1}{c}{Tree Nodes} & \multicolumn{1}{c}{Avg $\hat{\sigma}^2$} & \multicolumn{1}{c}{95\% CI for $\sigma^2$}   \\ \\
\hline
$CMC$ & 2.92 & 3.18  & 951 & 0.73 & {[0.57 ~,~ 0.90]} \\        
$LISA ~(unif ~ wgh)$ & 1.70 & 1.78  & 131 & 0.001 & {[0.0010~,~ 0.0012]} \\   
$modLISA~(wgh~avg)$ & 0.46  & 0.47 & 7 & 8.69 & {[8.61 ~,~ 8.77]} \\ 
$SingleMachine $ & 0.55 & 0.56 & 7 & 9.04 &  {[8.85 ~,~ 9.21]}  \\ 
 \hline
\end{tabular}
\end{table*}

\begin{table*}[h]
\footnotesize
\centering
\caption{Average coverage for  95\% credible intervals constructed for training (TrainCredCov) and test (TestCredCov) data and 95\% prediction intervals constructed for training (TrainPredCov) and test (TestPredCov)  data. The prediction interval coverage is estimated based on 1000 iid samples,  $N=20,000$ and $K=10$.}
\label{Covk10}
{
\begin{tabular}{crrrrrc}
\hline \\
Method  &  \multicolumn{1}{c}{TrainPredCov} & \multicolumn{1}{c}{TestPredCov} & \multicolumn{1}{c}{TrainCredCov} & \multicolumn{1}{c}{TestCredCov}  \\ \\
\hline
$CMC$ & 31.08 \% & 29.83 \%  & 48.18 \% & 99.80  \% \\ 
$LISA ~(unif ~ wgh)$ &  1.44 \% & 1.43 \%  & 99.98 \% & 99.96 \%\\ 
$modLISA~(wgh~avg)$ &   94.30 \% & 94.29 \%  & 71.08 \% & 70.32 \%\\ 
$SingleMachine $ &  94.67 \% & 94.65 \% & 71.58 \% & 71.54 \%\\ 
 \hline
\end{tabular}
}
\end{table*}

As $K$ decreases, the performance of LISA and CMC drops while modLISA generates stronger results, which is intuitively expected as each batch is larger and closer to the full sample when $K$ is smaller. We also note  the improvement of modLISA  over  SingleMachine in terms of RMSE. 
{In addition, Table \ref{Covk10} shows that the PI and CI coverages for modLISA and SingleMachine are very close.}

\subsection{\textbf{Varying the Underlying Model -- Different $f(x)$}}
Consistency in performance of modLISA can also be seen when the underlying model is changed. For instance, we also considered a sample of size  20,000 using  
\beq
f(x) = 3 \sqrt{x_1} -2{{x_2}^{2}} + 5 {x_3}{x_4},
\label{m2}
\eeq
 where $x=(x_{1},\ldots,x_{4})$ is a four-dimensional input vector that is simulated independently from a $U(0,1)$ and $y \sim \mathcal{N}(f(x),\sigma^2)$ with $\sigma^2=1$. Additional 5000 data have also been simulated as test cases. Similarly, by fitting this newly simulated dataset to each method with $K=30$, we have generated 1000 posterior samples with results averaged across three different realizations of data shown in {Tables \ref{f2} and \ref{Covf2}}.

\begin{table*}[!h]
\footnotesize
\centering
\caption{Tree sizes, estimates and 95\% credible intervals for $\sigma^2$, RMSE for training data (TrainRMSE) of size $N=20,000$ generated from \eqref{m2} and for test data (TestRMSE)  of size $5,000$ for each method run with $K=30$. Results are averaged over three different data replications.}
\label{f2}
\begin{tabular}{crrrrrc}
\hline \\
Method  & \multicolumn{1}{c}{TrainRMSE} & \multicolumn{1}{c}{TestRMSE}  & \multicolumn{1}{c}{Tree Nodes} & \multicolumn{1}{c}{Avg $\hat{\sigma}^2$} & \multicolumn{1}{c}{95\% CI for $\sigma^2$}  \\ \\
\hline
$CMC$ & 0.89 & 0.76  & 614 & 0.21 & {[0.18~,~ 0.34]}  \\ 
$LISA ~(unif ~ wgh)$ & 0.32 & 0.33  & 57 & 0.0001 & {[0.000083 ~,~0.000103]} \\ 
$modLISA~(wgh~avg)$ & 0.11  & 0.11  & 7 & 0.88 & {[0.87~,~ 0.89]} \\ 
$SingleMachine $ & 0.14 & 0.14 & 7 & 1.00 & {[0.99~,~1.03]}  \\
 \hline
\end{tabular}
\end{table*}

\begin{table*}[h]
\footnotesize
\centering
\caption{Average coverage for  95\% credible intervals constructed for training (TrainCredCov) and test (TestCredCov) data  and 95\% prediction intervals constructed for training (TrainPredCov) and test (TestPredCov)  data generated from \eqref{m2}. The prediction interval coverage is estimated based on 1000 iid samples,  $N=20,000$ and $K=30$. Results are averaged over three different data replications.}
\label{Covf2}
{
\begin{tabular}{crrrc}
\hline \\
Method  &  \multicolumn{1}{c}{TrainPredCov} & \multicolumn{1}{c}{TestPredCov} & \multicolumn{1}{c}{TrainCredCov} & \multicolumn{1}{c}{TestCredCov}  \\ \\
\hline
$CMC$ & 49.74 \% & 52.97 \% & 84.16  \% & 100 \% \\ 
$LISA ~(unif ~ wgh)$ & 1.50 \% & 1.49 \% & 100 \% & 100 \%\\ 
$modLISA~(wgh~avg)$ &  93.07 \% & 93.18 \%   & 82.88 \% & 83.50 \%\\ 
$SingleMachine $ & 94.82 \% & 94.81 \%  & 79.13 \% & 78.47 \%\\ 
 \hline
\end{tabular}
}
\end{table*}

Again modLISA {outperforms all alternative methods}, and its performance is  closest to SingleMachine. This confirms the previous simulation results and allows us to conclude that modLISA is a more reliable method for BART models with large datasets.

In the next section we will apply modLISA weighted average BART to a large socio-economic study.

\subsection{\textbf{Real Data Analysis}}
The American Community Survey (ACS) is a growing survey from the US Census Bureau and the Public Use Microdata Sample (PUMS) is a sample of responses to ACS which consists of various variables related to people and housing units \citep[see][2013]{dataset}. Considering the person-level data from PUMS 2013, we would like to predict a person's total income based on variables such as sex, age, education, class of worker, living state, and citizenship status. We have collected information related to people who are employed and have total income of at least \$5000 with education level of either Bachelor's degree, Master's degree, or a PhD which resulted in $437,297$ observations. We randomly divided the dataset into approximately $80\%$ training and $20\%$ testing sets, with $K=100$ batches considered for splitting the training data to apply modLISA. Computations were performed on the GPC supercomputer at the SciNet HPC Consortium \citep{scinet} using 100 cores, each running on $3,500$ observations. Considering the logarithm of total income for each person as the response variable, we have ran modLISA with weighted average and SingleMachine BART on this dataset for 1500 iterations (since SingleMachine is very slow) and discarded the first 1000 draws which resulted in 500 posterior samples. Table \ref{ACS} contains the results of Test RMSE as well as average post burn-in $\sigma^2$ estimates and tree sizes.
\begin{table*}[h]
\centering
\caption{Perfomance summaries computed from 1000 posterior samples generated from modLISA with $K=100$ and SingleMachine BART on PUMS 2013 test data.}
\label{ACS}
\begin{tabular}{crrrrrc}
\hline \\
Method  & \multicolumn{1}{c}{TestRMSE} & \multicolumn{1}{c}{Avg $\hat{\sigma}^2$} & \multicolumn{1}{c}{Tree Nodes} & \multicolumn{1}{c}{Speed-up}   \\ \\
\hline
$modLISA~(wgh~avg)$ & 0.71 &  0.488 & 7  & 90\% \\ 
$SingleMachine$ & 0.70 & 0.485  & 23  & -- \\ 
 \hline
\end{tabular}
\end{table*}


\begin{table*}[h]
\centering
\caption{Average acceptance rates of tree proposal moves.}
\label{accmodLISA_ACS}
\begin{tabular}{crrrc}
\hline \\
Method & \multicolumn{1}{c}{GROW} & \multicolumn{1}{c}{PRUNE}   & \multicolumn{1}{c}{CHANGE} \\ \\
\hline
$modLISA$ & 10 \% & 11 \% & 14 \% \\ 
$SingleMachine$ & 8\% &   7\% &  7\%  \\ 
\hline
\end{tabular}
\end{table*}

One can see that Test RMSE in modLISA is similar to the one from SingleMachine, but with a $90\%$ speed-up of modLISA over SingleMachine.  The speed-up can be  explained by the larger acceptance probabilities  and by the smaller tree sizes reported in  Tables \ref{accmodLISA_ACS} and \ref{ACS}, respectively. The 90\% speedup is important for applications like the one considered here, as it takes more than a day to simulate $1,500$ samples from the posterior using SingleMachine. The result indicate the potential of the proposed   method for reducing computational costs while producing accurate predictions.

\section{\textbf{Discussion}}

The challenge of using MCMC algorithms to  sample posterior distributions obtained from a massive sample of observations  is a serious one. 

In this paper, we introduced a new method based on the idea of randomly dividing the  data into batches and  drawing samples from each of the resulting sub-posteriors  independently and  {in parallel} on different machines.
We propose a novel way to define the sub-posteriors and we develop a strategy to combine the samples produced by each batch analysis for the   important  class of  Bayesian Additive Regression Trees Models. For this model, the proposed methodology performs very well and shows reduction in computation time that are as high as 90\%.  

In future work we would like to find a procedure for combining the sub-posterior samples that will make LISA easy to adapt to a wide variety of models.   We also hope that our paper will stimulate the research into this type of  divide-and-conquer approaches for Big Data MCMC and will expand the research on how to construct the batch-specific sub-posteriors along with novel strategies of combining or weighting the samples obtained from each batch analysis.

\section{Acknowledgement}
We thank the Editor, Grace Yi, the Guest Editor, Richard Lockhart, the Associate Editor and three anonymous referees for helpful suggestions that have greatly improved the paper. This work has been supported by NSERC of Canada grants to RVC and JSR.


\end{document}